\documentclass{article}

\usepackage[preprint]{neurips_2026}

\usepackage[utf8]{inputenc}
\usepackage[T1]{fontenc}
\usepackage{hyperref}
\usepackage{url}
\usepackage{booktabs}
\usepackage{amsfonts}
\usepackage{amsmath}
\usepackage{amssymb}
\usepackage{amsthm}
\usepackage{nicefrac}
\usepackage{microtype}
\usepackage{xcolor}
\usepackage{graphicx}
\usepackage{tikz}
\usetikzlibrary{bayesnet}
\usetikzlibrary{shapes.geometric,fit}
\usetikzlibrary{arrows.meta}
\newtheorem{theorem}{Theorem}
\newtheorem{proposition}{Proposition}

\newtheorem{remark}{Remark}
\usepackage[most]{tcolorbox}
\usepackage{enumitem}

\title{Causal Representation Learning for Generalisable Recommendation}

\author{%
Yorgos Felekis\thanks{Equal contribution}\hspace{1mm}  \thanks{Work done during internship at Spotify.}\\
University of Warwick \\
\And
Michael O'Riordan\footnotemark[1]\\
Spotify \\
\And
Oriol Corcoll \\
Spotify
\And
Ciar\'{a}n M. Gilligan-Lee \\
Spotify \& University College London\\
}

\begin{document}

\maketitle

\begin{abstract}
Predictive models trained on observational data often fail to generalise to the
distributions they encounter when deployed, especially when the training data is
a product of the system being optimised. Recommender systems are a canonical
example: they are trained on interaction logs confounded by the deployed policy,
past user behaviour, and platform filtering. As a result, the training
distribution differs substantially from the candidate distribution scored at
serving time, a gap that makes offline metrics unreliable predictors of online
performance. We address the distribution shift problem with a method motivated
by causal representation learning (CRL). We propose an information-theoretic
disentanglement criterion and prove that its optimum depends only on the causal
components of the input. We then derive a tractable variational lower bound that
makes the criterion optimisable from finite observational data alone. The scope
of our method is narrower than that of much of the CRL literature, in that we
target better generalisation under distribution shift, not full identification
of all latent causal factors. This narrower target is what makes the method
practical, requiring only the existing confounded logs, applying to any standard
supervised model, and adding no inference-time cost. Our headline evaluation is
an A/B test with millions of users on the music streaming platform Spotify, applied
to a production ranker for personalised playlist generation. A CRL variant
matched in capacity to the production baseline performed on par offline but
delivered substantial online gains in listener engagement. Complementary
evidence on the public KuaiRand recommendation dataset and a synthetic benchmark
with known causal structure shows the same pattern: offline parity with
baseline, gains under distribution shift. Across all three settings, adding our
causal disentanglement objective yields meaningfully better out-of-distribution
generalisation.
\end{abstract}

\section{Introduction}
\label{sec:intro}

Predictive models deployed in feedback loops with the systems that produce their
training data face a basic problem: the distribution they are trained on
diverges from the one they are asked to score at serving time. In this work we
focus on recommender systems, where the (user, item, context) tuples appearing
in training logs are those a deployed policy already chose to surface, and are
further filtered by user engagement. This phenomenon, broadly referred to as
\emph{exposure bias} in the recommendation literature~\cite{chen2023bias},
means that at serving time the same model is asked to score a much broader and
differently distributed candidate set. The gap between training and serving-time
distributions is widely understood to be a primary reason that offline ranking
metrics are often unreliable predictors of online performance. Causal representation learning (CRL)~\cite{scholkopf2021toward} offers a principled solution.
If a model's predictions rely only on causal mechanisms that generate the outcome,
they should remain valid under shifts that leave those mechanisms intact, including the
policy-induced shift between training and serving distributions. 
Classical CRL
aims to recover latent causal factors with strong identifiability
guarantees~\cite{khemakhem2020variational,locatello2019challenging},
but the assumptions it relies on---interventional or randomised
data, anchor variables with known causal semantics, or multiple
environments---are difficult to satisfy on observational
recommendation logs. A second strand applies CRL to recommendation
directly, including in production.
\citet{wang2022ood} treat user feature drift as an intervention
via a counterfactual VAE that infers latent user state and performs
post-intervention inference at serving time.
\citet{corcoll2026contrastive} block-identify causal treatment
representations via labelled contrastive pairs that stratify on
confounders and outcomes. DCCL~\cite{zhao2023dccl} and
Rec4Ad~\cite{gao2023rec4ad} disentangle specific bias mechanisms
(popularity, sample-selection) with components tied to those
mechanisms, and have been deployed online at Kuaishou and Taobao
respectively.

Similar to the recsys-CRL works above, we assume that the confounders are
observed and present in the training logs, encoded in the user, context, and
item features that the deployed policy conditions on. 
In this case, full identification is more than what is needed to address 
the specific failure mode introduced by exposure
bias. The spurious pathway can be suppressed directly by enforcing that the learned representation carries no information about the confounder beyond what the supervised target already requires (Figure~\ref{fig:three_level_hierarchy}). This narrower goal does not recover latent factors or enable counterfactual queries, but it does break the dependence that drives the offline-to-online gap, and it does so using only the existing observational training data. We instantiate this idea by learning a treatment representation that
remains predictive of the outcome conditional on the observed
confounder, while penalising mutual information between the learned
representation and the confounder. The resulting criterion keeps the
outcome-relevant causal content of the treatment and removes the
confounder-dependent non-causal leakage that standard rankers may
otherwise absorb. We make the following contributions:
\begin{itemize}[leftmargin=1em]
    \item We propose an information-theoretic disentanglement criterion and prove that, at the population level, it favors representations that retain the causal content of the treatment and discard confounder-dependent non-causal information. We then derive a tractable variational lower bound that makes the criterion optimisable from finite observational data alone.
  \item We introduce a deployment-ready CRL recipe for robustness
        under distribution shift: a straightforward modification of
        any standard supervised model, trained on the existing
        confounded log, with no inference-time cost and no
        requirement for randomised or interventional data.
  \item We validate the same recipe at three scales: a synthetic
        SCM, the KuaiRand benchmark~\cite{gao2022kuairand}, and a live A/B test on a production personalised playlist generation surface with millions of users on the music streaming platform Spotify. Across
        all three settings, a capacity-matched CRL variant matches
        the baseline on in-distribution metrics and outperforms it
        whenever evaluation moves off the training distribution: on
        a held-out random-exposure split, under intervention, and in
        a live serving production environment.
\end{itemize}

\section{Method}
\label{sec:methods}
\begin{figure}[t]
\centering
  \begin{tikzpicture}[
      scale=0.72,
      transform shape,
      font=\sffamily,
      >={Stealth[round]},
      node distance=1.2cm ]
      \definecolor{microPlaneColor}{RGB}{235, 250, 235}
      \definecolor{mesoPlaneColor}{RGB}{220, 240, 240}
      \definecolor{macroPlaneColor}{RGB}{240, 230, 250}
      \definecolor{causalRed}{RGB}{200, 50, 50}
      \definecolor{spuriousBlue}{RGB}{50, 100, 200}
      \definecolor{structureGray}{gray}{0.6}

      \tikzset{microNode/.style={draw, circle, fill=white,
               inner sep=0.5pt, minimum size=0.5cm, font=\scriptsize}}
      \tikzset{clusterNode/.style={draw, circle, fill=white,
               inner sep=2pt, minimum size=1.1cm,
               font=\small\bfseries, dashed}}
      \tikzset{obsNode/.style={draw, circle, fill=white,
               inner sep=2pt, minimum size=0.9cm, font=\small}}
      \tikzset{abstractionEdge/.style={->, dashed, thick, opacity=0.6}}
      \tikzset{layerLabel/.style={anchor=south,
               font=\bfseries\footnotesize, rotate=63.4}}

      \coordinate (L1_BL) at (-4.0, -2.5);
      \coordinate (L1_BR) at (5.0,  -2.5);
      \coordinate (L1_TR) at (6.5,   0.5);
      \coordinate (L1_TL) at (-2.5,  0.5);
      \fill[microPlaneColor, opacity=0.9]
          (L1_BL) -- (L1_BR) -- (L1_TR) -- (L1_TL) -- cycle;
      \node[layerLabel, green!40!black] at (6.3, -1.0)
          {Micro-Latent Space};

      \node[obsNode] (Z_micro) at (1.5,  -1.2) {$Z$};
      \node[obsNode] (Y_micro) at (-2.5, -1.2) {$Y$};

      \node[microNode, draw=causalRed]   (tc1)  at (-1.7, -0.1) {$t_c^1$};
      \node[microNode, draw=causalRed]   (tc2)  at (-0.9, -0.8) {$t_c^2$};
      \node[microNode, draw=causalRed]   (tck)  at ( 0.1, -1.7) {$t_c^k$};
      \node[microNode, draw=spuriousBlue](tnc1) at ( 2.5, -0.1) {$t_{nc}^1$};
      \node[microNode, draw=spuriousBlue](tnc2) at ( 3.5, -0.8) {$t_{nc}^2$};
      \node[microNode, draw=spuriousBlue](tncd) at ( 4.4, -1.7) {$t_{nc}^d$};

      \foreach \n in {tc1, tck, tnc1, tncd}
          \draw[->, structureGray] (Z_micro) -- (\n);
      \draw[->, structureGray] (Z_micro) -- (Y_micro);
      \foreach \n in {tc1, tc2, tck}
          \draw[->, causalRed, thick] (\n) -- (Y_micro);
      \draw[->, causalRed, thin] (tc1)  to[bend left=20]  (tc2);
      \draw[->, causalRed, thin] (tc2)  to[bend left=20]  (tck);
      \draw[->, spuriousBlue, thin] (tnc1) to[bend right=20] (tnc2);
      \draw[->, spuriousBlue, thin] (tnc2) to[bend right=20] (tncd);

      \def\MesoOffset{2.8}
      \coordinate (L2_BL) at (-4.0, -2.5+\MesoOffset);
      \coordinate (L2_BR) at ( 5.0, -2.5+\MesoOffset);
      \coordinate (L2_TR) at ( 6.5,  0.5+\MesoOffset);
      \coordinate (L2_TL) at (-2.5,  0.5+\MesoOffset);
      \fill[mesoPlaneColor, opacity=0.8]
          (L2_BL) -- (L2_BR) -- (L2_TR) -- (L2_TL) -- cycle;
      \node[layerLabel, teal!60] at (6.3, -1.0+\MesoOffset)
          {Meso-Latent Clusters};

      \node[clusterNode, draw=causalRed]
          (TC)  at (-0.8, -0.8+\MesoOffset) {$T_C$};
      \node[clusterNode, draw=spuriousBlue]
          (TnC) at ( 3.5, -0.8+\MesoOffset) {$T_{nC}$};

      \foreach \n in {tc1, tc2, tck}
          \draw[abstractionEdge, causalRed] (\n.north) -- (TC.south);
      \foreach \n in {tnc1, tnc2, tncd}
          \draw[abstractionEdge, spuriousBlue] (\n.north) -- (TnC.south);

      \def\MacroOffset{5.6}
      \coordinate (L3_BL) at (-4.0, -2.5+\MacroOffset);
      \coordinate (L3_BR) at ( 5.0, -2.5+\MacroOffset);
      \coordinate (L3_TR) at ( 6.5,  0.5+\MacroOffset);
      \coordinate (L3_TL) at (-2.5,  0.5+\MacroOffset);
      \fill[macroPlaneColor, opacity=0.7]
          (L3_BL) -- (L3_BR) -- (L3_TR) -- (L3_TL) -- cycle;
      \node[layerLabel, violet!60] at (6.3, -1.0+\MacroOffset)
          {Macro-Observed Space};

      \node[obsNode] (Z_macro) at ( 4.5, -0.2+\MacroOffset) {$Z$};
      \node[clusterNode, thick]
                     (T_obs)  at ( 1.5, -1.5+\MacroOffset) {$T$};
      \node[obsNode] (Y_macro) at (-1.5, -0.2+\MacroOffset) {$Y$};

      \draw[->] (Z_macro) -- (T_obs);
      \draw[->] (Z_macro) -- (Y_macro);
      \draw[->] (T_obs)   -- (Y_macro);

      \draw[abstractionEdge, causalRed,    very thick]
          (TC.north)  -- (T_obs.south);
      \draw[abstractionEdge, spuriousBlue, very thick]
          (TnC.north) -- (T_obs.south);

      \node[font=\small, align=center, fill=white, inner sep=2pt,
            opacity=0.95, text=black, draw=gray!30, rounded corners]
          at (1.4, \MesoOffset-0.1) {$m(T_C, T_{nC})$};

  \end{tikzpicture}

  \caption{\textbf{Hierarchical Causal Entanglement.}
    \textit{Bottom (Micro-Latent):} Only the red subset $\{t_c^i\}$
    causally drives $Y$; the blue subset $\{t_{nc}^j\}$ contains
    spurious factors correlated with $Z$.
    \textit{Middle (Meso-Latent):} Micro-variables abstract into
    causal ($T_C$) and non-causal ($T_{nC}$) clusters.
    \textit{Top (Observed):} We observe only the entangled treatment
    $T=m(T_C,T_{nC})$, which creates the spurious backdoor path
    $T \leftarrow T_{nC} \leftarrow Z \rightarrow Y$ that standard
    models will exploit.}
  \label{fig:three_level_hierarchy}
\end{figure}
\subsection{Setting}

We adopt the Structural Causal Model framework of~\cite{Pearl2009}.
A causal model is a tuple\(
    \mathcal{M}=(\mathbf{V},\mathbf{U},\mathcal{F},P(\mathbf{U})),\)
where $\mathbf{V}$ is the set of endogenous variables, $\mathbf{U}$
is the set of exogenous noise variables with joint distribution
$P(\mathbf{U})$, and $\mathcal{F}=\{f_V\}_{V\in\mathbf{V}}$ is a
collection of structural assignments
\( V := f_V(\mathrm{Pa}(V),U_V).\) In our setting, $\mathbf{V}=\{T,Y,Z\}$, where $Z$ is an observed
confounder (in recommendation, a user representation), $T$ is the
treatment to be ranked (an item), and $Y$ is the outcome of interest. As illustrated in Figure~\ref{fig:three_level_hierarchy}, the
observed treatment is an entangled mixing
\(T=m(T_C,T_{nC})
\)
of latent causal components $T_C$ and non-causal components $T_{nC}$. We make two structural assumptions: \textbf{(A1)} $Y \perp T_{nC} \mid T_C, Z$ meaning that the outcome depends on the treatment only through its causal latents, given the confounder; \textbf{(A2)} $I(T_{nC}; Z \mid T_C) > 0$: which we call \emph{unique spurious leakage}: the non-causal latents
retain information about the confounder that is not already contained in the causal latents. This is the setting that arises in recommender systems whenever the deployed policy has correlated non-causal item-side factors with user-side signal, creating a spurious backdoor path $T \leftarrow T_{nC} \leftarrow Z \to Y$ that a standard predictor learning $\mathbb{E}[Y \mid T, Z]$ will absorb. Such structures arise also in many causal representation learning settings where spurious correlations between treatment and confounders lead to biased models if not explicitly addressed. 

While maximising $I(T;Y \mid Z)$ ensures the learned representation is predictive of the outcome, it may still preserve spurious signals as it does not penalise representations for carrying extra non-causal information. To address this, we introduce a more targeted objective that trades predictive utility against confounder dependence.

\subsection{A New Objective for Causal Disentanglement}
We propose to learn a representation $g(T)$ that maximizes the following:
\begin{equation}\label{eq:Jg}
\tcboxmath[
  colback=gray!5,
  colframe=gray!55,
  boxrule=0.6pt,
  arc=1.5mm,
  left=2mm,
  right=2mm,
  top=1.5mm,
  bottom=1.5mm
]{
    J(g)
    =
    \underbrace{I(g(T);Y\mid Z)}_{\text{Utility}}
    -
    \lambda\,
    \underbrace{I(g(T);Z)}_{\text{Penalty}}
}
\end{equation}
called the \textit{Disentanglement Criterion}, where $\lambda \ge 0$ is a hyperparameter that controls the trade-off between predictive utility and confounder invariance. This objective balances two forces: the utility term rewards
representations that are predictive of the outcome $Y$ beyond $Z$, thereby encouraging the preservation of $T_C$; the penalty term discourages information shared with $Z$, thereby suppressing $Z$-correlated non-causal factors. Under our structural assumptions, the penalty term is exactly the channel through which $T_{nC}$ leaks into the prediction via the spurious path $T_{nC}\leftarrow Z\to Y$. An equivalent bounded reparameterization is given in Appendix~\ref{appendix:reparam}. We first verify that the criterion ranks the ideal causal
representation above the naive entangled one.
\begin{proposition}[Ideal versus naive]
\label{prop:ideal-vs-naive}
Let $g^\star(T)=T_C$ and let
$\tilde g(T)=T=(T_C,T_{nC})$ be a naive representation that retains both latent
components. Under assumptions \textbf{A1} and
\textbf{A2}, for any $\lambda>0$,
\begin{align}
    J(g^\star)>J(\tilde g).
\end{align}
\end{proposition}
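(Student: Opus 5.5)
We compare the two terms of $J$ separately, treating $\tilde g(T)$ as the pair $(T_C,T_{nC})$. Since $T=m(T_C,T_{nC})$ is a function of the latents, any information in $T$ is also in the pair, so working with the pair is valid for the comparison below. The plan is to show that the utility terms coincide and that the penalty of the naive representation is strictly larger. With $\lambda>0$, this gives the strict inequality.

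\emph{Step 1: equal utility.} By the chain rule for conditional mutual information,
\begin{align}
I(T_C,T_{nC};Y\mid Z) = I(T_C;Y\mid Z) + I(T_{nC};Y\mid T_C,Z).
\end{align}
Assumption \textbf{A1} states $Y\perp T_{nC}\mid T_C,Z$, so the second term vanishes. Hence $I(\tilde g(T);Y\mid Z)=I(g^\star(T);Y\mid Z)$. If one insists on using $T$ itself rather than the pair, the data-processing inequality gives $I(T;Y\mid Z)\le I(T_C,T_{nC};Y\mid Z)=I(T_C;Y\mid Z)$. This only helps the inequality, provided we still control the penalty from below in Step 2.

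\emph{Step 2: strictly larger penalty.} Again by the chain rule,
\begin{align}
I(T_C,T_{nC};Z) = I(T_C;Z) + I(T_{nC};Z\mid T_C).
\end{align}
Assumption \textbf{A2} makes the second term strictly positive, so $I(\tilde g(T);Z)>I(g^\star(T);Z)$.

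\emph{Step 3: combine.} Subtracting the two criteria gives
\begin{align}
J(g^\star)-J(\tilde g)=\lambda\, I(T_{nC};Z\mid T_C)>0
\end{align}
for every $\lambda>0$, which proves the proposition.

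The main obstacle is the identification of $\tilde g(T)=T$ with the pair $(T_C,T_{nC})$. Step 2 needs $I(T;Z)=I(T_C,T_{nC};Z)$, which requires the mixing $m$ to be injective, or at least information-preserving with respect to $Z$. The statement's notation $\tilde g(T)=T=(T_C,T_{nC})$ implicitly asserts this, so I would make the injectivity of $m$ explicit as the justification. A secondary point is that the mutual informations must be finite so that the subtraction in Step 3 is well defined. I would state this as a standing regularity assumption.
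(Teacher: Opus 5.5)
Your proof is correct and follows the paper's argument step for step. You apply the chain rule to the utility and use \textbf{A1} to show the two utilities are equal, then apply the chain rule to the penalty and use \textbf{A2} to show the naive penalty is strictly larger, giving $J(g^\star)-J(\tilde g)=\lambda\, I(T_{nC};Z\mid T_C)>0$. Your added caveats are sound refinements that the paper leaves implicit through the notation $\tilde g(T)=T=(T_C,T_{nC})$: that $m$ must be injective, and that the mutual informations must be finite. Note that your opening justification (``$T$ is a function of the latents'') only gives the upper bound $I(T;Z)\le I(T_C,T_{nC};Z)$. That is the wrong direction for Step~2, which is exactly why the injectivity assumption is needed.
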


\begin{proof}[Proof sketch]
By the chain rule for CMI: $I(T_C,T_{nC};Y\mid Z)
=
I(T_C;Y\mid Z)
+
I(T_{nC};Y\mid T_C,Z)$.
The second term is zero by \textbf{A1}, so the two representations
have equal utility. For the penalty, $I(T_C,T_{nC};Z)
=
I(T_C;Z)+I(T_{nC};Z\mid T_C)$,
and the second term is strictly positive by \textbf{A2}. Hence the
naive representation incurs a strictly larger penalty while achieving
the same utility. Therefore $J(g^\star)>J(\tilde g)$.
Full proof in Appendix~\ref{appendix:proofs}.
\end{proof}
Proposition~\ref{prop:ideal-vs-naive} compares two fixed
representations. We next formalise the more general mechanism through which the
disentanglement criterion removes non-causal information. The observed
treatment \(T=m(T_C,T_{nC})\) may already lose information about
\(T_C\), as does a learned representation \(g(T)\). We therefore define \(\phi(T_C)\) as the information about \(T_C\) recoverable from \(g(T)\) via a deterministic map \(r\), i.e.\ \(r(g(T))=\phi(T_C)\). A \emph{causally purified representation} is then \(\bar g:=r(g)=\phi(T_C)\). Because \(\bar g\) is a deterministic function of \(g\), the information in \(g\) decomposes exactly into the recovered causal content \(\phi(T_C)\) and a residual beyond it. We assume that this residual carries no additional predictive information about \(Y\)
given \(Z\), but remains informative about the confounder, which are the representation-level counterparts of \textbf{A1} and \textbf{A2}:
\(I(g(T);\,Y\mid\phi(T_C),Z)=0,~I(g(T);\,Z\mid\phi(T_C))>0.\)
The following proposition shows that stripping
the residual strictly improves \(J\).
\begin{proposition}[Causal purification]
\label{prop:purification}
Let \(g(T)\) be a representation satisfying the recoverability
assumption, and define its causal purification as
\(\bar g(T):=r(g(T))=\phi(T_C).\) Then, for any \(\lambda>0\),
\begin{align}
    J(\bar g)>J(g).
\end{align}
\end{proposition}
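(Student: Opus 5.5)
The argument runs parallel to Proposition~\ref{prop:ideal-vs-naive}, with the pair \((\phi(T_C), g(T))\) playing the role of \((T_C,(T_C,T_{nC}))\). Because \(\bar g=r(g(T))\) is a deterministic function of \(g(T)\), the joint variable \((g(T),\bar g(T))\) carries exactly the same information as \(g(T)\) alone. Hence every mutual information involving \(g(T)\) equals the corresponding quantity involving \((\bar g(T), g(T))\). This lets us expand each term of \(J(g)\) with the chain rule, isolating the contribution of \(\bar g\) from the residual.

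\emph{Utility term.} We write
\begin{align}
I(g(T);Y\mid Z) = I(\bar g(T), g(T);Y\mid Z) = I(\phi(T_C);Y\mid Z) + I(g(T);Y\mid \phi(T_C),Z).
\end{align}
The second summand vanishes by the first recoverability assumption \(I(g(T);Y\mid\phi(T_C),Z)=0\). Therefore \(I(g(T);Y\mid Z)=I(\bar g(T);Y\mid Z)\), and the two representations have equal utility.

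\emph{Penalty term.} In the same way,
\begin{align}
I(g(T);Z) = I(\bar g(T),g(T);Z) = I(\phi(T_C);Z) + I(g(T);Z\mid\phi(T_C)).
\end{align}
The last summand is strictly positive by the second recoverability assumption, so \(I(g(T);Z) > I(\bar g(T);Z)\).

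\emph{Conclusion.} Since \(\lambda>0\), subtracting the two expansions gives
\begin{align}
J(\bar g)-J(g)=\lambda\, I(g(T);Z\mid\phi(T_C))>0.
\end{align}

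The only step needing care is the identity \(I(g;\cdot)=I(\bar g,g;\cdot)\), including its conditional version given \(Z\). It holds because conditioning on \(g\) determines \(\bar g\), so \(H(\bar g\mid g)=0\) and \(H(\bar g\mid g,Z)=0\); the chain rule then adds a zero term. We also need the quantities to be finite, so that the subtraction is well defined. I would state this as a standing assumption, for example by taking the variables discrete or all relevant mutual informations finite, as is implicit in the definition of \(J\).
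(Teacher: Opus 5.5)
Your proof is correct and is essentially the paper's own argument. Both proofs use that $\bar g=r(g(T))$ is a deterministic function of $g(T)$ to replace $g$ by the pair $(g,\bar g)$, then expand with the chain rule: the first recoverability condition kills $I(g;Y\mid\phi(T_C),Z)$, the second makes $I(g;Z\mid\phi(T_C))$ strictly positive, and so $J(\bar g)-J(g)=\lambda I(g;Z\mid\phi(T_C))>0$. Your finiteness caveat is a sensible addition that the paper leaves implicit; for non-discrete variables, justify the vanishing terms directly as $I(\bar g;Y\mid g,Z)=0$ and $I(\bar g;Z\mid g)=0$ (since $\bar g$ is constant given $g$) rather than through $H(\bar g\mid g)=0$.
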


\begin{proof}[Proof sketch]
Since \(\bar g\) is a deterministic function of \(g\), we have \(I(g;Z)=I(\bar g;Z)+I(g;Z\mid \bar g).\)

By assumption, \(I(g;Y\mid Z) = I(\phi(T_C);Y\mid Z) = I(\bar g;Y\mid Z),\) so the utility is preserved. The penalty gap is \(I(g;Z)-I(\bar g;Z)=I(g;Z\mid \phi(T_C))>0,\) which yields \(J(\bar g)-J(g)
=
\lambda I(g;Z\mid \phi(T_C))>0.\)
Full proof in Appendix~\ref{appendix:proofs}.
\end{proof}
\begin{remark} Proposition~\ref{prop:purification} shows that the disentanglement criterion preserves the causal content available in the treatment representation while removing non-causal confounder-dependent information that is not needed for predicting $Y$. Hence, a maximising representation is purified with respect to the leakage
targeted by the criterion and can be written as
$g^\star(T)=\phi(T_C)$.
\end{remark}
However, the form of $\phi$ remains unspecified. Should it preserve all information in $T_C$ (i.e.\ be invertible), or could a lossy compression be preferable? Since $T_C$ is correlated with both $Y$ (good for utility) and $Z$ (costly for the penalty), any lossy $\phi$ reduces the penalty at the cost of some utility, with the trade-off governed by $\lambda$. To formalise this, let $\phi_\text{id}$ denote the identity (uncompressed) and let $\phi\in\mathcal{H}$ denote any non-invertible alternative transformation, where
$\mathcal{H}$ is the class of all non-invertible lossy functions applicable to
$T_C$. For each $\phi\in\mathcal{H}$ we define
\begin{align}
  \Delta U(\phi)
  :=
  I(T_C;Y\mid Z)-I(\phi(T_C);Y\mid Z),
  \qquad
  \Delta P(\phi)
  :=
  I(T_C;Z)-I(\phi(T_C);Z).
\end{align}
where $\Delta U(\phi)$ is the utility lost by compressing the causal component, while $\Delta P(\phi)$ is the reduction in confounding penalty gained by that compression. We assume a \emph{no-free-lunch condition} according to which every nontrivial lossy transformation $\phi\in\mathcal{H}$ under consideration incurs a genuine utility loss and yields a genuine penalty reduction:
\begin{align}\label{eq:nofree}
\Delta U(\phi)>0~\text{and}~\Delta P(\phi)>0.
\end{align}
We further define the \emph{critical penalty weight} as the ratio of the two:
\begin{align}
    \Lambda(\phi)
  :=
  \frac{\Delta U(\phi)}{\Delta P(\phi)}.
\end{align}
\begin{proposition}[Lossless regime]
\label{prop:lossless}
Let $\lambda^\star := \inf_{\phi\in\mathcal{H}}\Lambda(\phi)$.
Under the no-free-lunch condition~\eqref{eq:nofree}, for any
$\lambda\in(0,\lambda^\star)$, the uncompressed
representation $T_C$ strictly dominates every lossy alternative $\phi(T_C)$ with $\phi\in\mathcal{H}$ under $J$; i.e. \(J(T_C)>J(\phi(T_C)),~\forall \phi\in\mathcal{H}.\)
\end{proposition}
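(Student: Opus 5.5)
The plan is to compute the difference $J(T_C)-J(\phi(T_C))$ directly from the definition~\eqref{eq:Jg} and show it is strictly positive for every $\phi\in\mathcal{H}$. Expanding both utility and penalty terms and grouping gives
\begin{align}
  J(T_C)-J(\phi(T_C))
  &= \bigl[I(T_C;Y\mid Z)-I(\phi(T_C);Y\mid Z)\bigr]
   - \lambda\bigl[I(T_C;Z)-I(\phi(T_C);Z)\bigr] \nonumber\\
  &= \Delta U(\phi)-\lambda\,\Delta P(\phi).
\end{align}
This identity is purely algebraic and needs no structural assumption beyond the quantities being finite.

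Next I will use the no-free-lunch condition~\eqref{eq:nofree}. Since $\Delta P(\phi)>0$, the difference factors as
\begin{align}
  J(T_C)-J(\phi(T_C)) = \Delta P(\phi)\bigl(\Lambda(\phi)-\lambda\bigr).
\end{align}
Its sign is therefore the sign of $\Lambda(\phi)-\lambda$. By definition of $\lambda^\star$ as an infimum, $\Lambda(\phi)\ge\lambda^\star$ for every $\phi\in\mathcal{H}$. Combined with the hypothesis $\lambda<\lambda^\star$, this gives $\Lambda(\phi)-\lambda\ge\lambda^\star-\lambda>0$. Multiplying by $\Delta P(\phi)>0$ yields strict positivity, i.e.\ $J(T_C)>J(\phi(T_C))$ for all $\phi\in\mathcal{H}$.

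The only delicate point is ensuring the interval $(0,\lambda^\star)$ is meaningful and the argument does not degenerate. The no-free-lunch condition makes each $\Lambda(\phi)$ strictly positive, so $\lambda^\star\ge 0$. If $\lambda^\star=0$ the interval is empty and the claim holds vacuously. If $\lambda^\star=+\infty$ (e.g.\ $\mathcal{H}$ empty), then $\lambda\in(0,\infty)$ and $\Lambda(\phi)-\lambda>0$ still holds pointwise. I will also state that the mutual informations are assumed finite, so that $\Delta U$ and $\Delta P$ are well-defined real numbers and the subtraction is legitimate. Strictness does not need the infimum to be attained: it comes from the strict inequality $\lambda<\lambda^\star\le\Lambda(\phi)$, so no compactness argument over $\mathcal{H}$ is required.
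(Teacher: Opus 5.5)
Your proposal is correct and follows essentially the same route as the paper, which likewise reduces $J(T_C)>J(\phi(T_C))$ to $\lambda<\Lambda(\phi)$ via $\Delta P(\phi)>0$ and then applies $\lambda<\lambda^\star\le\Lambda(\phi)$ uniformly over $\mathcal{H}$. The explicit factorisation $\Delta P(\phi)\bigl(\Lambda(\phi)-\lambda\bigr)$, the finiteness remark, and your treatment of the degenerate cases $\lambda^\star\in\{0,+\infty\}$ are harmless additions that the paper leaves implicit.
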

\begin{proof}
Fix any $\phi\in\mathcal{H}$. Expanding the definition of $J$ gives
\[
I(T_C;Y\mid Z)-\lambda I(T_C;Z)
>
I(\phi(T_C);Y\mid Z)-\lambda I(\phi(T_C);Z) \iff \lambda
    <
    \frac{\Delta U(\phi)}{\Delta P(\phi)}
    =
    \Lambda(\phi)
\]
Now suppose $\lambda \in (0,\lambda^\star)$. Since $\lambda<\lambda^\star=\inf_{\phi\in\mathcal{H}}\Lambda(\phi)
\le\Lambda(\phi)$ for every $\phi\in\mathcal{H}$, the condition
$\lambda<\Lambda(\phi)$ holds uniformly, so $J(T_C)>J(\phi(T_C))$
for every $\phi\in\mathcal{H}$. Hence, throughout the regime
$(0,\lambda^\star)$, the optimal representation
preserves the full causal component.
\end{proof}
Proposition~\ref{prop:lossless} shows that after non-causal leakage has been removed, sufficiently small penalty weights $\lambda$ preserve the full causal component, and $\phi_\text{id}$ is optimal; compression of $T_C$ becomes preferable only once the penalty reduction outweighs the corresponding utility loss. Finally, we provide a concrete sanity check for the preceding claims in Appendix~\ref{appendix:sanity} for a linear Gaussian SCM where all mutual-information terms can be computed in closed form: the appendix verifies that $J$ strictly prefers the causal representation $T_C$ over the naive entangled representation $(T_C,T_{nC})$, and illustrates the lossless regime by varying the amount of stochastic compression applied to $T_C$.

\subsection{A Tractable Objective For Causal Disentanglement}
\label{sec:estimator}
The criterion in~\eqref{eq:Jg} cannot be optimised directly since both
$I(g(T);Y\mid Z)$ and $I(g(T);Z)$ are unknown and intractable to
compute from finite data. We therefore derive a tractable surrogate by
replacing each term with a computable bound: a lower bound on the
utility and an upper bound on the penalty so that the surrogate is a
valid lower bound on $J(g)$.

\paragraph{Lower bound on the utility.}
For any conditional distribution $q(y\mid g(T),z)$, the standard
variational bound on conditional mutual information (CMI)~\cite{barber2003algorithm}
gives
\begin{align}
  I(g(T);\,Y\mid Z)
  \;\ge\;
  \mathbb{E}_{p(g(T),y,z)}\!\bigl[\log q(y\mid g(T),z)\bigr]
  + H(Y\mid Z),
\end{align}
which motivates learning $q$ to approximate the true posterior
$p(y\mid g(T),z)$. We instantiate this via a contrastive InfoNCE
estimator~\cite{oord2018representation}: for each anchor
$(g(T),y^+,z)$ we draw $K$ negative outcomes
$y^-_1,\ldots,y^-_K$ from $p(y\mid z)$ and define
\begin{align}
  \mathcal{L}_{\mathrm{utility}}(\theta)
  := -\mathbb{E}\!\left[
       \log\frac{e^{f_\theta(g(T),\,y^+)/\tau}}
                {e^{f_\theta(g(T),\,y^+)/\tau}
                 +\sum_{k=1}^K e^{f_\theta(g(T),\,y^-_k)/\tau}}
     \right].
\end{align}
The InfoNCE loss directly lower-bounds the CMI~\cite{oord2018representation}:
\begin{equation}
  I(g(T);\,Y\mid Z)
  \;\ge\;
  \log(K{+}1) - \mathcal{L}_{\mathrm{utility}}(\theta)
  =: -\mathcal{L}_{\mathrm{utility}}(\theta) + C_Y,
  \quad C_Y := \log(K{+}1).
  \label{eq:infonce}
\end{equation}
In the recommender setting this is exactly the in-batch cross-entropy
ranking loss with negatives drawn from the same user context, so
$\mathcal{L}_{\mathrm{utility}}$ is the loss the production ranker is
already trained on.

\paragraph{Upper bound on the penalty.}
For any representation $g(T)$ and critic $d_\phi$, the NCE-CLUB
estimator~\cite{liang2023factorizedcontrastivelearninggoing}
upper-bounds the mutual information between $g(T)$ and $Z$:
\begin{equation}\label{eq:nceclub}
  I(g(T);\,Z)
  \;\le\;
  \underbrace{
    \mathbb{E}_{p(g(T),z)}\bigl[d_\phi(g(T),z)\bigr]
   -\mathbb{E}_{p(g(T))p(z)}\bigl[d_\phi(g(T),z)\bigr]
  }_{=:\,I_{\mathrm{NCE\text{-}CLUB}}(g(T);\,Z;\,d_\phi)}.
\end{equation}
The first term scores \emph{positive} pairs $(g(T),z)\sim p(g,z)$,
while the second scores \emph{negative} pairs drawn independently from
$p(g)p(z)$. The bound is tight when $d_\phi$ recovers the log-density
ratio $d^\star(g,z)=\log p(z\mid g)-\log p(z)$. In practice, we
approximate this via a learnable critic
$d_\phi(g(T),z)\approx\log q_\phi(z\mid g(T))-\log p(z)$,
learned by discriminating joint samples $(g,z)\sim p(g,z)$ from
independent samples $(g,z)\sim p(g)p(z)$.
Further details are provided in Appendix~\ref{appendix:nceclub_app}.

Substituting~\eqref{eq:infonce} and~\eqref{eq:nceclub} into~\eqref{eq:Jg}, yields a tractable variational lower bound for $J(g)$.

\begin{theorem}
\label{thm:bound}
For any encoder $g_\theta$ and NCE-CLUB critic $d_\phi$,
\begin{align}
  J(g_\theta)
  \;\ge\;
  -\mathcal{L}_{\mathrm{utility}}(\theta)
  -\lambda\, I_{\mathrm{NCE\text{-}CLUB}}(g_\theta(T);\,Z;\,d_\phi)
  + C_Y.
\end{align}
Since $C_Y=\log(K{+}1)$ does not depend on $(\theta,\phi)$,
maximising this bound is equivalent to minimising
\begin{equation}
  \mathcal{L}_{\mathrm{final}}(\theta,\phi)
  := \mathcal{L}_{\mathrm{utility}}(\theta)
   + \lambda\,\mathcal{L}_{\mathrm{penalty}}(\theta,\phi),
  \quad
  \mathcal{L}_{\mathrm{penalty}}(\theta,\phi)
  := I_{\mathrm{NCE\text{-}CLUB}}(g_\theta(T);\,Z;\,d_\phi).
  \label{eq:final-loss}
\end{equation}
\end{theorem}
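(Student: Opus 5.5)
The plan is to combine the two one-sided bounds already stated in this section, exploiting the sign of $\lambda$. By definition, $J(g_\theta)=I(g_\theta(T);Y\mid Z)-\lambda\,I(g_\theta(T);Z)$. I will bound the first term from below and the second from above, then add the two inequalities.

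\textbf{Step 1 (utility).} I will invoke the InfoNCE bound~\eqref{eq:infonce}, applied to the representation $g_\theta(T)$ with negatives drawn from $p(y\mid z)$. This gives $I(g_\theta(T);Y\mid Z)\ge -\mathcal{L}_{\mathrm{utility}}(\theta)+\log(K{+}1)$ for any critic $f_\theta$.

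I will justify the conditional version by conditioning on $Z=z$. For fixed $z$, the standard InfoNCE argument of~\cite{oord2018representation} applies to the pair $(g_\theta(T),Y)$ under $p(\cdot\mid z)$. This yields $I(g_\theta(T);Y\mid Z=z)\ge \log(K{+}1)-\mathcal{L}_{\mathrm{utility}}(\theta\mid z)$. Averaging over $p(z)$ then gives the stated inequality, because both the CMI and the loss are expectations over $z$.

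\textbf{Step 2 (penalty).} I will invoke the NCE-CLUB upper bound~\eqref{eq:nceclub}, namely $I(g_\theta(T);Z)\le I_{\mathrm{NCE\text{-}CLUB}}(g_\theta(T);Z;d_\phi)$. Since $\lambda\ge 0$, multiplying by $-\lambda$ reverses the inequality: $-\lambda I(g_\theta(T);Z)\ge -\lambda I_{\mathrm{NCE\text{-}CLUB}}$. If $\lambda=0$ this step is trivially an equality.

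\textbf{Step 3 (combine).} Adding the two inequalities gives the claimed lower bound on $J(g_\theta)$. For the second part of the theorem, note that $C_Y=\log(K{+}1)$ depends only on the number of negatives and not on $(\theta,\phi)$. Hence the right-hand side equals $C_Y-\mathcal{L}_{\mathrm{final}}(\theta,\phi)$, so maximising the bound is equivalent to minimising $\mathcal{L}_{\mathrm{final}}$ as defined in~\eqref{eq:final-loss}.

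\textbf{Main obstacle.} The main obstacle is that Step 2 holds for \emph{any} critic $d_\phi$ only under conditions stated in Appendix~\ref{appendix:nceclub_app}. In general, CLUB-type quantities upper-bound the mutual information only when the critic approximates the log density ratio, for example when $d_\phi=\log q_\phi(z\mid g)-\log p(z)$ with $q_\phi$ close to $p(z\mid g)$ in the sense of the CLUB/vCLUB analysis.

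My first approach will be to take~\eqref{eq:nceclub} exactly as stated, since it is an earlier result of the paper, and inherit whatever critic condition it carries. If that condition is nontrivial, I will make it explicit in the hypothesis, either as $d_\phi$ being the optimal critic or as a KL condition on $q_\phi$. A secondary check, handled in the same way, is that the InfoNCE bound requires the negatives to be genuinely sampled from $p(y\mid z)$.
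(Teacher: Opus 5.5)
Your proposal is correct and follows the paper's argument, which is just the one-line substitution of the InfoNCE lower bound~\eqref{eq:infonce} and the NCE-CLUB upper bound~\eqref{eq:nceclub} into $J$, using $\lambda\ge 0$ to flip the penalty inequality. Your caveat about the critic is also justified: Appendix~\ref{appendix:nceclub_app} asserts the NCE-CLUB bound only for the optimal critic and treats $d_\phi$ as a plug-in approximation of it, so the ``for any critic'' in the statement really does inherit that condition.
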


The resulting objective augments the standard ranking loss with an explicit mutual-information penalty that discourages dependence between the learned representation and the confounder, unlike IPM-based methods~\cite{shalit2017estimating} which require explicit population matching.

\section{Related work}
\label{sec:related}
Our work draws on causal representation learning, debiasing for
recommender systems, and adversarial mutual-information estimation.
As discussed in \S\ref{sec:intro}, our work is closely related to \citet{wang2022ood},
\citet{corcoll2026contrastive}, DCCL~\cite{zhao2023dccl}, and
Rec4Ad~\cite{gao2023rec4ad}. In particular, \citeauthor{corcoll2026contrastive} requires
labelled contrastive pairs built by stratifying on confounders and
outcomes, whereas the present work uses a single information-theoretic
penalty with no labelled-pair, propensity, or component-specific
machinery.

\paragraph{Treatment-effect estimation with neural networks.}
CFR/TARNet~\cite{shalit2017estimating} balances treatment-conditional
representations via integral-probability metrics.
CEVAE~\cite{louizos2017causal} addresses unobserved confounders with
a deep generative model. Dragonnet~\cite{shi2019dragonnet} adds a
propensity-prediction head to encourage representations sufficient
for treatment-effect identification. Structured-treatment work
targets graph- or text-valued
$T$~\cite{kaddour2021causal,harada2021graphite,pryzant2021causal}.

\paragraph{Debiasing causal recommendation.}
IPS reweighting~\cite{schnabel2016recommendations,joachims2017unbiased}
and its variants---asymmetric tri-training~\cite{saito2020asymmetric},
instrumental-variable methods such as
IViDR~\cite{deng2024ividr} and iDCF~\cite{zhang2023idcf}, and
explicit causal-graph
formulations~\cite{wang2020causal,zhang2021causal}, correct exposure
bias by reweighting samples or estimating an inverse mechanism. 
Earlier representation-side
precursors disentangle bias-specific signals: causal
embeddings~\cite{bonner2018cause} for the treated/control split and
DICE~\cite{zheng2021dice} for interest vs.\ conformity. The
industrial deployments DCCL and Rec4Ad extend this to multi-component
pipelines tied to specific bias mechanisms (popularity,
sample-selection).

\paragraph{Adversarial MI estimation and invariance.}
The practical estimator combines gradient
reversal~\cite{ganin2015domain} with the InfoNCE lower
bound~\cite{oord2018representation}. Alternative MI estimators
include MINE~\cite{belghazi2018mine} and the CLUB upper
bound~\cite{cheng2020club};
NCE-CLUB~\cite{liang2023factorizedcontrastivelearninggoing} provides
the upper-bound machinery used in our diagnostics
(Appendix~\ref{appendix:nceclub_app}). MI-based representation
objectives are also at the core of Deep
InfoMax~\cite{hjelm2019deepinfomax}. The disentanglement criterion
$J(g)$ is complementary to invariant risk
minimisation~\cite{peters2016causal,arjovsky2019irm}: IRM identifies
predictors that are invariant across multiple environments, whereas
we suppress dependence on a single observed confounder when no
environment partition is available.

\paragraph{Causal bottlenecks and abstractions.}
Our objective is related to causal information-bottleneck methods,
which retain task-relevant causal information while discarding
irrelevant variation. The Causal Information
Bottleneck~\cite{simoes25a} learns causal variable abstractions by
compressing an input while preserving causal control over a target,
and Structural Causal Bottleneck Models~\cite{bing2026structural}
assume that causal effects between high-dimensional SCM variables
are mediated by lower-dimensional bottlenecks of their causes. These
perspectives connect to causal
abstraction~\cite{rubenstein2017causal}, which seek coarser variables or models that are interventionally
consistent with the original system, together with methods that learn such maps~\cite{
zennaro23a,felekis2024causal,kekic2024,xia2024neural,
felekis2026distributionallyrobustcausalabstractions}. In the language
of Figure~\ref{fig:three_level_hierarchy}, such methods would ask
whether the entangled $T=m(T_C,T_{nC})$ admits a coarser description,
ideally $T_C$, that preserves interventional consistency. Our setting is adjacent but deliberately
weaker: with $m$ fixed and unobserved,
we learn a map $T \mapsto g(T)$ sufficient for the
recoverable causal content $\phi(T_C)$, while
removing the $Z$-correlated non-causal leakage carried by $T_{nC}$.
We do not identify $T_C$, learn $m$, or verify model-level consistency; we learn a representation that retains the causal information needed for prediction.

\section{Experiments}
\label{sec:experiments}

We evaluate the proposed method in three settings of increasing
realism. The synthetic SCM (\S\ref{sec:exp-synth}) shows that the
method successfully isolates the causal components of the treatment 
representation on a problem where the causal/non-causal split is known. 
The KuaiRand experiment (\S\ref{sec:exp-kuairand}) evaluates the
method on real recommendation data, training on a confounded algorithmic-exposure
log and measuring offline performance under a held-out
random-exposure distribution. The online A/B test
(\S\ref{sec:exp-online}) verifies that the same offline robustness
translates into measurable online gains on a deployed production
model.

The objective in \eqref{eq:final-loss} bounds the penalty $I(g(T);Z)$
from above via NCE-CLUB~\citep{liang2023factorizedcontrastivelearninggoing}. 
In the experiments that follow we instead
reduce an InfoNCE \emph{lower}
bound~\citep{oord2018representation} on the same MI 
via gradient reversal~\citep{ganin2015domain}, and 
monitor the NCE-CLUB upper bound as a diagnostic.
Figure~\ref{fig:mi-bounds} shows both estimators on a production
training run where they decay together once gradient reversal becomes
active, with the gap closing as residual MI shrinks. 
This is a convenient implementation detail that works well in our production setup.

\begin{figure}[t]
  \centering
  \includegraphics[width=0.6\linewidth]{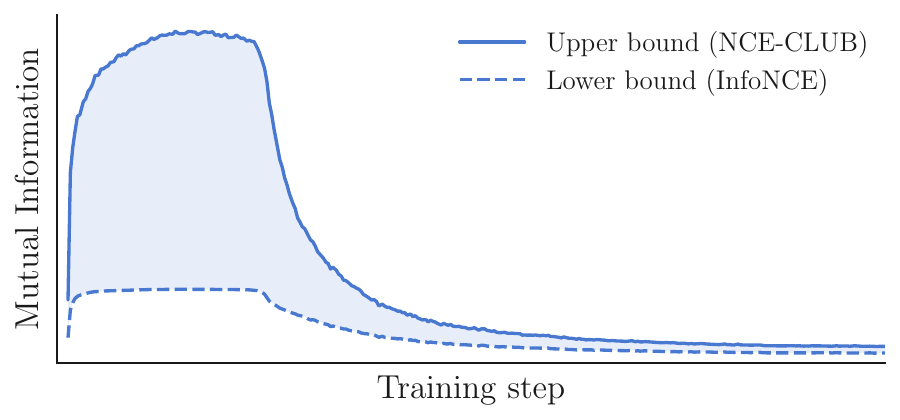}
  \caption{InfoNCE lower bound (dashed) and NCE-CLUB upper bound
    (solid) on $I(g(T); Z)$ during a production CRL training run.
    Gradient reversal acts on the InfoNCE bound; NCE-CLUB is computed
    only as a diagnostic. Both rise during the critic warm-up, then
    decay together as gradient reversal drives MI down, with the gap
    closing as residual MI shrinks. Axis values are omitted for
    confidentiality. A conservative reading of the bounds gives a
    reduction in MI of at least $5\times$, with each bound dropping
    by roughly an order of magnitude vs.\ the non-CRL baseline.}
  \label{fig:mi-bounds}
\end{figure}

\subsection{Synthetic SCM}
\label{sec:exp-synth}

We instantiate a linear additive SCM with a confounder
$X \in \mathbb{R}^{10}$, a treatment $T \in \mathbb{R}^{10}$
decomposable into $d_c = 5$ causal and $5$ non-causal dimensions, and
an outcome $Y$ that depends on $T_C$ and $X$ but not $T_{nC}$. Full
data-generating equations are given in Appendix~\ref{appendix:synth}. Since $Y$ is
invariant to interventions on $T_{nC}$, a model that has correctly
isolated the causal part of the treatment should also be invariant to
those interventions. We measure the \emph{intervention sensitivity}
$|\hat Y(T) - \hat Y(T')|$ where $T'$ has its non-causal coordinates
resampled, and the \emph{prediction MAE} on $Y$.
In Figure~\ref{fig:synth-noise-sweep} we sweep the outcome noise
$\sigma_Y$ from $0$ to $1$, and show how the baseline CATE model 
is sensitive to interventions on the $T_{nC}$
non-causal signal, whereas the CRL variant's intervention sensitivity
sits essentially at zero across the entire range. The MAE
is matched between the two methods.

\begin{figure}[t]
  \centering
  \includegraphics[width=0.95\linewidth]{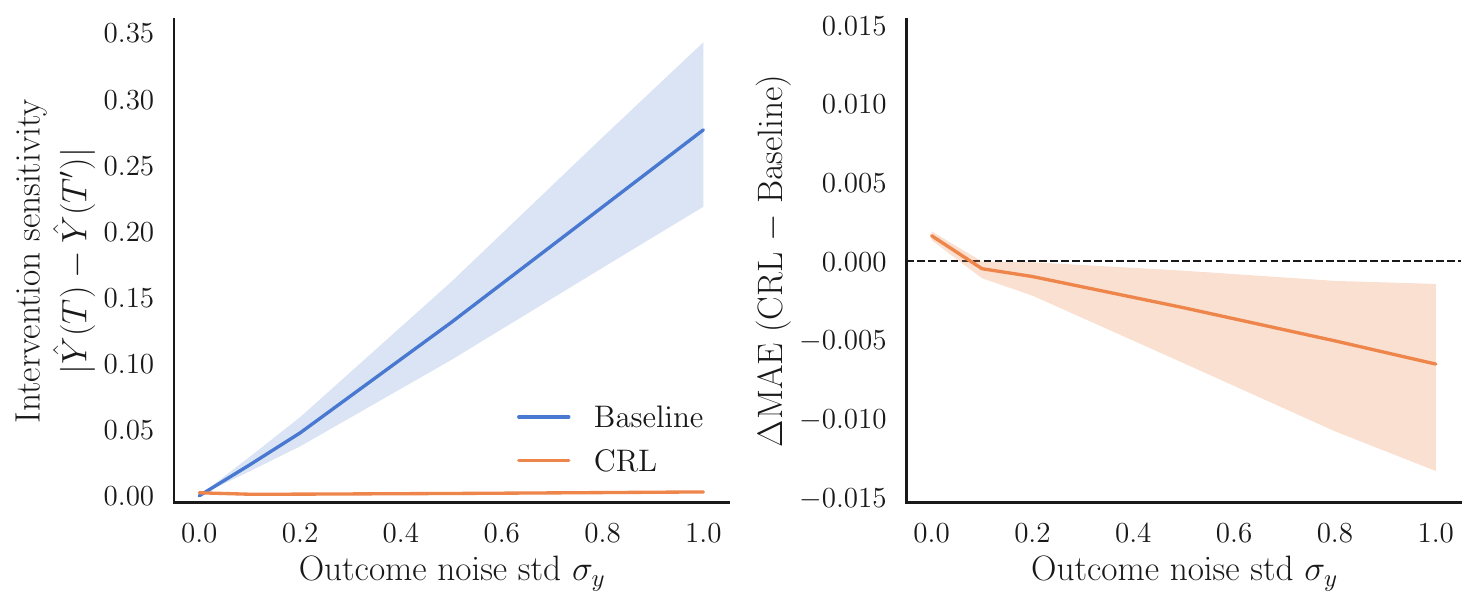}
  \caption{Synthetic SCM, sweep over outcome-noise scale
    $\sigma_Y \in [0, 1]$ across $2{,}000$ seeds per noise level.
    Lines are per-seed medians;
    shaded bands are the inter-quartile range across seeds.
    \textbf{Left:} intervention sensitivity
    $\lvert\hat Y(T) - \hat Y(T')\rvert$, where $T'$ resamples the
    non-causal coordinates of $T$; lower is better, with $0$
    indicating that the model has not absorbed any signal from
    $T_{nC}$. \textbf{Right:} paired
    $\Delta\text{MAE} = \text{MAE}_{\text{CRL}} -
    \text{MAE}_{\text{baseline}}$ per seed, centred on the
    zero line. The CRL penalty does not degrade predictive performance, and is in fact slightly better at
    higher noise levels.}
  \label{fig:synth-noise-sweep}
\end{figure}

\subsection{KuaiRand}
\label{sec:exp-kuairand}

The KuaiRand-Pure~\citep{gao2022kuairand} dataset contains $\sim$1.4M
algorithmic-exposure interactions and $\sim$1.2M random-exposure
interactions over the same $27{,}285$ users and $7{,}583$ items. The
algorithmic split is heavily confounded; the random split is, by
construction, an unbiased sample from the user $\times$ item space.
The pair gives us a direct measurement of 
distribution shift on a problem where the confounding is real, large,
and known to live primarily in \emph{which} (user, item) pairs are
observed rather than in any single feature.

We use a standard DeepFM~\cite{guo2017deepfm} as our click-through rate (CTR) backbone,
with learned $16$-dimensional user-id and video-id embeddings,
$\sim$15 continuous video features passed through an MLP encoder,
and a DNN tower with hidden sizes $[32, 16]$. 
DeepFM is a standard baseline in the recommendation literature
\citep[see eg.][]{wang2021dcnv2},
and has been applied to the KuaiRand dataset \citep{wang2024gprec,huang2024nise}.
Our CRL variant adds a bilinear MI critic between the item representation
and the user representation, attached via gradient reversal.
Architecture and capacity are otherwise identical between baseline
and CRL. Training details are in Appendix~\ref{appendix:kuairand}.

In Table~\ref{tab:kuairand}, \emph{Biased validation AUC} measures 
ranking quality on the training
distribution, and \emph{Random test AUC} is evaluated on the held-out
random-exposure split. Comparing the random test AUC across baseline and 
CRL variants indicates the robustness to distribution shift.
We additionally report Recall@5 and NDCG@5 
\citep[see eg.][who report similar metrics on KuaiRand]{zhang2023idcf,deng2024ividr}.
For each user in the random-exposure test split we rank the items
they were exposed to by predicted click probability; Recall@$K$ is
the fraction of that user's positives (\texttt{is\_click}=1) that
appear in the top-$K$, and NDCG@$K$ uses binary relevance with
discount $1/\log_2(i+1)$ at rank $i$ and IDCG computed over the
user's true positives. Both are averaged over users with at least
one positive and at least five exposed items in the random split.

Over $500$ seeds, the baseline and CRL variants are close on the
training distribution: biased validation AUC is
$0.7574 \pm 0.0012$ for the baseline and $0.7508 \pm 0.0109$ for
the CRL variant ($\Delta = -0.0066$, intervals here and below are
per-seed standard deviations). On the unconfounded random-exposure
test split, the CRL variant attains AUC $0.6582 \pm 0.0056$ versus
$0.6453 \pm 0.0034$ for the baseline, an improvement
of $+0.0129$ AUC, roughly $3.8\times$ the baseline's per-seed
standard deviation.
The improved Random test AUC shows that the CRL model generalises better
under distribution shift. This pattern---offline parity on the
in-distribution split combined with gains on a held-out
random-exposure split---is similar to what \citet{taori2020measuring}
call \emph{effective robustness}: improvement under shift not
predicted by in-distribution accuracy alone. In the online A/B test
(Section~\ref{sec:exp-online}) we show how this improved generalisation
ability translates to real gains in listener engagement.

\begin{table}[t]
\centering
\caption{KuaiRand: close on the (biased) training distribution,
significantly better on the unconfounded random-exposure split.
$500$-seed mean $\pm$ per-seed standard deviation.}
\label{tab:kuairand}
\begin{tabular}{lccr}
\toprule
Metric                           & Baseline ($\lambda=0$)
                                 & CRL ($\lambda=0.3$)
                                 & $\Delta$ \\
\midrule
Biased validation AUC            & $0.7574 \pm 0.0012$
                                 & $0.7508 \pm 0.0109$
                                 & $-0.0066$ \\
\textbf{Random test AUC}         & $0.6453 \pm 0.0034$
                                 & $\mathbf{0.6582 \pm 0.0056}$
                                 & $\mathbf{+0.0129}$ \\
Recall@5                         & $0.2912 \pm 0.0030$
                                 & $0.3004 \pm 0.0095$
                                 & $+0.0092$ \\
NDCG@5                           & $0.3080 \pm 0.0047$
                                 & $0.3249 \pm 0.0163$
                                 & $+0.0169$ \\
\bottomrule
\end{tabular}
\end{table}

\subsection{Online A/B test on personalised playlist generation}
\label{sec:exp-online}

The KuaiRand experiment established that the CRL variant generalises better to a
held-out distribution than its capacity-matched baseline. The strongest
evidence that this offline robustness translates into real-world impact
is an online A/B test on a production personalised playlist generation
surface at Spotify, conducted under standard online
controlled-experiment methodology with millions of users.

The test compares two versions of the production session-ranking
model: the deployed baseline and a capacity-matched CRL variant
trained with the disentanglement objective of \S\ref{sec:methods}.
The CRL variant uses an adversarial loss to decorrelate the model's
internal representation of the candidate track from its representations
of session- and user-level context, mirroring the recipe used in the
KuaiRand experiment.

Table~\ref{tab:online-ab} reports the headline engagement metrics, 
measured over two weeks during the test.
The CRL variant increases track streams by $+0.75\%$ and minutes
played by $+0.50\%$ while decreasing track skips by $-0.61\%$, with
all three changes significant at the $95\%$ level. The magnitude of
these gains is of the same order as the improvement that the production
ranker, itself a ground-up rebuild, delivered over its mature
predecessor.

The result mirrors the KuaiRand finding. In both settings the CRL and
baseline variants are architecturally and capacity-matched, and in
both they perform similarly on the in-distribution offline metric.
The CRL variant's advantage materialises only when evaluation moves
to a distribution that differs from the training one: the random-exposure
split on KuaiRand, and the live serving distribution in production.
This is precisely the regime in which the theory predicts a causal
representation should help, and it is the regime that matters at
deployment.

\begin{table}[t]
  \centering
  \caption{Online A/B test results on the personalised playlist
    generation surface. Percentage change of the CRL variant relative
    to the production baseline, with 95\% confidence intervals.}
  \label{tab:online-ab}
  \begin{tabular}{lcc}
    \toprule
    Metric & \% change & 95\% CI \\
    \midrule
    Track Streams  & $+0.75\%$ & $[+0.54\%, +0.96\%]$ \\
    Track Skips    & $-0.61\%$ & $[-0.74\%, -0.47\%]$ \\
    Minutes Played & $+0.50\%$ & $[+0.27\%, +0.73\%]$ \\
    \bottomrule
  \end{tabular}
\end{table}

\subsection{Discussion}
\label{sec:exp-discussion}

The three experiments tell a consistent story. The synthetic SCM
shows that the proposed adversarial penalty removes the non-causal components of the
treatment representation while preserving the causal ones, robustly
across outcome-noise levels.
The KuaiRand experiment shows that this approach has the expected consequence on real data:
indistinguishable in-distribution offline performance, but a
meaningful robustness gain under a known distribution shift. The
online A/B test shows that the same robustness gain
translates to a production setting, where it shows up as improved listener
engagement. Note that in all three settings, standard offline evaluation on the training distribution is essentially
uninformative about whether the CRL variant is better. The synthetic
MAE is unchanged, the KuaiRand biased-validation AUC is unchanged,
and the offline metrics for the production model and its CRL variant are
unchanged. The CRL impact is apparent
only when evaluated against a distribution other than the
training one. Offline-only comparisons of recommendation models can
systematically under-detect the kind of generalisation improvements
that matter at deployment.

\section{Conclusion}
\label{sec:conclusion}
We proposed a CRL-motivated method for making predictive models more robust under the distribution shift characteristic of on-policy recommendation. At its core is the \emph{Disentanglement Criterion},
a new information-theoretic objective whose population-level optimum
discards non-causal treatment latents; we characterise the regime
under which the full causal component is preserved and derive a
tractable variational lower bound that reduces to a straightforward
modification of any standard supervised model. Evaluated on a synthetic SCM, the KuaiRand benchmark, and a live A/B test on a production
music-streaming recommender, the method matches in-distribution
offline performance in all three settings and improves robustness
under distribution shift. The A/B test shows significant gains in online listener engagement. 
Limitations include the inability to empirically verify \textbf{A1}
and \textbf{A2} on real data, a single product surface for the online
evaluation, and the potential need for more expressive critics under
richer confounding structures. Natural extensions include temporal
confounder structures, characterising when in-distribution metrics
cease to predict online gains, stress-testing across additional
production recommenders, and connecting the criterion to causal
abstraction frameworks by studying when $g(T)$ can be interpreted as
an interventionally consistent abstraction from $T$ to $\phi(T_C)$.


\bibliographystyle{plainnat}
\bibliography{ref}

\appendix

\section{Proofs for \S\ref{sec:methods}}
\label{appendix:proofs}

Throughout, we use the structural assumptions introduced in
\S\ref{sec:methods}. The observed treatment decomposes as
\[
    T=m(T_C,T_{nC}),
\]
the outcome is conditionally independent of the non-causal latents,
\[
    \textbf{(A1)}\qquad
    Y\perp T_{nC}\mid(T_C,Z),
\]
and the non-causal latents contain residual confounder information
beyond the causal latents,
\[
    \textbf{(A2)}\qquad
    I(T_{nC};Z\mid T_C)>0.
\]

\subsection*{Proof of Proposition~\ref{prop:ideal-vs-naive}}

We compare the ideal representation \(g^\star(T)=T_C\) with the naive
representation \(\tilde g(T)=T=(T_C,T_{nC})\).

\paragraph{Utility.}
By the chain rule for conditional mutual information,
\[
\begin{aligned}
I(\tilde g(T);Y\mid Z)
&= I(T_C,T_{nC};Y\mid Z) \\
&= I(T_C;Y\mid Z)
   + I(T_{nC};Y\mid T_C,Z).
\end{aligned}
\]
The second term is zero by \textbf{A1}. Hence
\[
    I(\tilde g(T);Y\mid Z)
    =
    I(T_C;Y\mid Z)
    =
    I(g^\star(T);Y\mid Z).
\]

\paragraph{Penalty.}
Again, by the mutual-information chain rule,
\[
\begin{aligned}
I(\tilde g(T);Z)
&= I(T_C,T_{nC};Z) \\
&= I(T_C;Z)+I(T_{nC};Z\mid T_C).
\end{aligned}
\]
By \textbf{A2}, \(I(T_{nC};Z\mid T_C)>0\). Therefore
\[
    I(\tilde g(T);Z)>I(T_C;Z)=I(g^\star(T);Z).
\]

Equivalently, using the entropy chain rule,
\[
\begin{aligned}
I(T_C,T_{nC};Z)
&=
H(T_C,T_{nC})-H(T_C,T_{nC}\mid Z) \\
&=
\bigl[H(T_C)+H(T_{nC}\mid T_C)\bigr]
-
\bigl[H(T_C\mid Z)+H(T_{nC}\mid T_C,Z)\bigr] \\
&=
I(T_C;Z)+I(T_{nC};Z\mid T_C).
\end{aligned}
\]
Thus, the excess penalty of the naive representation is exactly the
unique spurious leakage term.

\paragraph{Conclusion.}
The two representations have equal utility, but the naive
representation has a strictly larger penalty. Hence
\[
\begin{aligned}
J(g^\star)-J(\tilde g)
&=
\bigl[I(g^\star;Y\mid Z)-I(\tilde g;Y\mid Z)\bigr]
+
\lambda\bigl[I(\tilde g;Z)-I(g^\star;Z)\bigr] \\
&=
\lambda\,I(T_{nC};Z\mid T_C)>0.
\end{aligned}
\]
Therefore \(J(g^\star)>J(\tilde g)\).
\qed

\subsection*{Proof of Proposition~\ref{prop:purification}}

Consider a representation satisfying the recoverability assumption:
\[
    g(T)=h\bigl(\phi(T_C),\ell(T_{nC})\bigr),
\]
where the causal component \(\phi(T_C)\) is deterministically
recoverable from \(g(T)\). That is, there exists a map \(r\) such that
\[
    r(g(T))=\phi(T_C).
\]
Define the purified representation by
\[
    \bar g(T):=r(g(T))=\phi(T_C).
\]
Thus \(\bar g\) is a deterministic function of \(g\). At the representation level, we assume that the residual information
in \(g(T)\) beyond the recovered causal component contains no
additional outcome information but does contain residual confounder
information:
\[
    \textbf{(R1)}\qquad
    I(g(T);Y\mid \phi(T_C),Z)=0,
\]
and
\[
    \textbf{(R2)}\qquad
    I(g(T);Z\mid \phi(T_C))>0.
\]

\paragraph{Utility preservation.}
Since \(\bar g=\phi(T_C)\) is a deterministic function of \(g\), the
pair \((g,\bar g)\) contains the same information as \(g\). Hence
\[
    I(g;Y\mid Z)=I(g,\bar g;Y\mid Z).
\]
By the chain rule for conditional mutual information,
\[
\begin{aligned}
I(g,\bar g;Y\mid Z)
&=
I(\bar g;Y\mid Z)
+
I(g;Y\mid \bar g,Z).
\end{aligned}
\]
Since \(\bar g=\phi(T_C)\), the second term is
\[
    I(g;Y\mid \bar g,Z)
    =
    I(g;Y\mid \phi(T_C),Z),
\]
which is zero by \textbf{R1}. Therefore
\[
    I(g;Y\mid Z)=I(\bar g;Y\mid Z).
\]
Thus purification preserves the utility term.

\paragraph{Penalty reduction.}
Again, since \(\bar g\) is a deterministic function of \(g\),
\[
    I(g;Z)=I(g,\bar g;Z).
\]
Applying the mutual-information chain rule,
\[
\begin{aligned}
I(g,\bar g;Z)
&=
I(\bar g;Z)+I(g;Z\mid \bar g).
\end{aligned}
\]
Using \(\bar g=\phi(T_C)\), we obtain
\[
    I(g;Z)-I(\bar g;Z)
    =
    I(g;Z\mid \bar g)
    =
    I(g;Z\mid \phi(T_C)).
\]
By \textbf{R2}, this quantity is strictly positive. Hence
\[
    I(\bar g;Z)<I(g;Z).
\]

\paragraph{Objective comparison.}
Combining utility preservation and penalty reduction,
\[
\begin{aligned}
J(\bar g)-J(g)
&=
\bigl[I(\bar g;Y\mid Z)-I(g;Y\mid Z)\bigr]
+
\lambda\bigl[I(g;Z)-I(\bar g;Z)\bigr] \\
&=
\lambda\,I(g;Z\mid \phi(T_C))>0.
\end{aligned}
\]
Therefore \(J(\bar g)>J(g)\) for every \(\lambda>0\).
\qed

\subsection*{General Predictive Purification Result}

The purification result in the main text assumes that the causal
component \(\phi(T_C)\) is recoverable from the representation. We
now show that a more general purification principle holds without this
assumption.

Let
\[
    G := g(T)
\]
be an arbitrary representation. We call \(\bar G=s(G)\) a
\emph{predictive purification} of \(G\) if it is a deterministic
function of \(G\) satisfying
\[
    Y\perp G\mid(\bar G,Z).
\]
In words, once \(\bar G\) and \(Z\) are known, the remaining
information in \(G\) does not provide additional predictive
information about \(Y\).

\paragraph{Existence.}
Under standard regularity conditions, such a statistic exists. One
canonical construction is the conditional response profile
\[
    s(G)
    :=
    \bigl\{p(Y\mid G,Z=z):z\in\mathcal{Z}\bigr\}.
\]
This object records, for each value of \(G\), the conditional law of
\(Y\) across values of the confounder. By construction,
\[
    p(Y\mid G,Z)=p(Y\mid s(G),Z),
\]
which implies
\[
    Y\perp G\mid(s(G),Z).
\]
This statistic may be infinite-dimensional and is used only as a
population-level object.

\begin{proposition}[General predictive purification]
\label{prop:purification_general}
Let \(G=g(T)\) be any representation, and let
\(\bar G=s(G)\) be a predictive purification. Then, for any
\(\lambda>0\),
\[
    J(\bar G)\ge J(G),
\]
with exact improvement
\[
    J(\bar G)-J(G)
    =
    \lambda I(G;Z\mid \bar G).
\]
The inequality is strict whenever
\(I(G;Z\mid \bar G)>0\).
\end{proposition}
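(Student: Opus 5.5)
The argument follows the proof of Proposition~\ref{prop:purification}, with the recoverability assumption \textbf{R1} replaced by the defining property $Y\perp G\mid(\bar G,Z)$ of a predictive purification. The structure is the same: show that the utility term is unchanged, then identify the penalty gap exactly. The only tools needed are the chain rule for (conditional) mutual information and the fact that $\bar G=s(G)$ is a deterministic function of $G$.

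\textbf{Step 1 (utility).} Since $\bar G$ is a measurable function of $G$, the pair $(G,\bar G)$ generates the same $\sigma$-algebra as $G$. Hence $I(G;Y\mid Z)=I(G,\bar G;Y\mid Z)$. By the chain rule,
\[
I(G,\bar G;Y\mid Z)=I(\bar G;Y\mid Z)+I(G;Y\mid \bar G,Z).
\]
The second term vanishes because $Y\perp G\mid(\bar G,Z)$. Therefore $I(G;Y\mid Z)=I(\bar G;Y\mid Z)$.

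\textbf{Step 2 (penalty).} In the same way, $I(G;Z)=I(G,\bar G;Z)$, and the chain rule gives
\[
I(G,\bar G;Z)=I(\bar G;Z)+I(G;Z\mid\bar G).
\]
So $I(G;Z)-I(\bar G;Z)=I(G;Z\mid \bar G)\ge 0$, with nonnegativity holding because conditional mutual information is always nonnegative.

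\textbf{Step 3 (conclusion).} Substituting both identities into the definition of $J$ gives
\[
J(\bar G)-J(G)=\lambda\, I(G;Z\mid\bar G).
\]
This is nonnegative for every $\lambda>0$, and strictly positive whenever $I(G;Z\mid\bar G)>0$.

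\textbf{Main obstacle: measure-theoretic care.} The canonical $s(G)$ may be infinite-dimensional, so the identity $I(G;\cdot)=I(G,s(G);\cdot)$ and the chain rule must be justified for general random elements. The plan is to work with the general (Dobrushin/Pinsker) definition of mutual information as a supremum over finite partitions. Under that definition, invariance under adjoining a measurable function and the chain rule both hold whenever the quantities involved are finite.

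For the same reason I would state the standing assumption that $I(G;Z)$ and $I(G;Y\mid Z)$ are finite. Without it, the subtraction in Step 2 could take the form $\infty-\infty$, and the identity $J(\bar G)-J(G)=\lambda I(G;Z\mid\bar G)$ would not be well defined.
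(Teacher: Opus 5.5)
Your proof is correct and matches the paper's argument step for step: you adjoin $\bar G$ to $G$ at no cost because it is a deterministic function of $G$, split both the utility and the penalty with the chain rule, remove $I(G;Y\mid \bar G,Z)$ using predictive sufficiency, and read off $J(\bar G)-J(G)=\lambda I(G;Z\mid \bar G)$. Your added assumption that $I(G;Z)$ and $I(G;Y\mid Z)$ are finite, which rules out an $\infty-\infty$ difference when the canonical $s(G)$ is infinite-dimensional, is a sensible refinement that the paper leaves implicit.
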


\begin{proof}
Since \(\bar G=s(G)\) is a deterministic function of \(G\),
\[
    I(G;Y\mid Z)=I(G,\bar G;Y\mid Z).
\]
By the chain rule,
\[
I(G,\bar G;Y\mid Z)
=
I(\bar G;Y\mid Z)
+
I(G;Y\mid \bar G,Z).
\]
The second term is zero by predictive sufficiency, so
\[
    I(G;Y\mid Z)=I(\bar G;Y\mid Z).
\]

For the penalty,
\[
    I(G;Z)=I(G,\bar G;Z)
    =
    I(\bar G;Z)+I(G;Z\mid \bar G).
\]
Thus
\[
    I(G;Z)-I(\bar G;Z)=I(G;Z\mid \bar G)\ge 0.
\]

Combining,
\[
    J(\bar G)-J(G)
    =
    \lambda I(G;Z\mid \bar G)\ge 0.
\]
\end{proof}

\paragraph{Connection to the causal setting.}
The result above is purely information-theoretic and applies to any
representation. Its causal interpretation follows from assumption
\textbf{A1}. For the full entangled latent representation
\[
    G=(T_C,T_{nC}),
\]
we have
\[
    p(Y\mid T_C,T_{nC},Z)=p(Y\mid T_C,Z),
\]
so the predictive purification depends only on the causal component
\(T_C\), up to outcome-equivalence. Thus, in the SCM considered in
this work, the population target of purification can be written as
\[
    g(T)=\phi(T_C).
\]

\paragraph{Relation to the main result.}
Proposition~\ref{prop:purification_general} shows that the
disentanglement criterion removes information that is irrelevant for
predicting \(Y\mid Z\) but still informative about the confounder,
without requiring any structural assumptions on \(g\). The main-text
result strengthens this conclusion by imposing a recoverability
assumption, which allows the purified representation to be identified
explicitly as a function of the causal component \(T_C\).

\section{Reparameterisation of the Disentanglement Criterion}
\label{appendix:reparam}

The objective in~\eqref{eq:Jg},
\[
    J(g)=I(g(T);Y\mid Z)-\lambda I(g(T);Z),
\]
uses the unbounded penalty weight \(\lambda\in[0,\infty)\). It is often
more convenient to work with an equivalent bounded parameter
\(\gamma\in[0,1]\).

\paragraph{Reparameterisation.}
For \(\lambda>0\), maximising \(J(g)\) is equivalent to minimising
\[
    -J(g)=\lambda I(g(T);Z)-I(g(T);Y\mid Z).
\]
Dividing by the positive constant \(1+\lambda\) does not change the
minimiser, so the same solutions are obtained by minimising
\[
    \frac{\lambda}{1+\lambda}I(g(T);Z)
    -
    \frac{1}{1+\lambda}I(g(T);Y\mid Z).
\]
Define
\[
    \gamma:=\frac{1}{1+\lambda}\in(0,1),
\]
so that \(1-\gamma=\lambda/(1+\lambda)\). The equivalent bounded
objective is
\begin{equation}
  L_\gamma(g)
  :=
  (1-\gamma)I(g(T);Z)
  -
  \gamma I(g(T);Y\mid Z),
  \label{eq:Jprime}
\end{equation}
which we minimise over \(g\). The limiting case \(\lambda=0\)
corresponds to \(\gamma=1\), i.e. pure utility maximisation. Conversely,
\(\lambda\to\infty\) corresponds to \(\gamma\to 0\), i.e. pure
penalty minimisation.

\paragraph{Optimality condition under \(\gamma\).}
Using the same notation as Proposition~\ref{prop:lossless}, compare
the uncompressed representation \(T_C\) with a lossy alternative
\(\phi(T_C)\). The uncompressed representation is preferred under
\(L_\gamma\) when
\[
    L_\gamma(T_C)<L_\gamma(\phi(T_C)).
\]
Expanding this inequality gives
\[
    (1-\gamma)\Delta P(\phi)
    -
    \gamma\Delta U(\phi)
    <0,
\]
or equivalently
\[
    \gamma
    >
    \frac{\Delta P(\phi)}
         {\Delta U(\phi)+\Delta P(\phi)}
    =:
    \Gamma(\phi).
\]
Thus \(\Gamma(\phi)\) is the critical utility weight above which the
full causal component is preferred to the lossy compression \(\phi\).

\begin{proposition}[High-utility regime under \(\gamma\)]
\label{prop:lossless-gamma}
Let \(\mathcal{H}\), \(\Delta U(\phi)\), and \(\Delta P(\phi)\) be as
in Proposition~\ref{prop:lossless}, and assume the no-free-lunch
condition \(\Delta U(\phi)>0\) and \(\Delta P(\phi)>0\) for every
\(\phi\in\mathcal{H}\). Define
\[
    \gamma^\star
    :=
    \sup_{\phi\in\mathcal{H}}
    \Gamma(\phi)
    =
    \sup_{\phi\in\mathcal{H}}
    \frac{\Delta P(\phi)}
         {\Delta U(\phi)+\Delta P(\phi)}.
\]
Then for any \(\gamma\in(\gamma^\star,1)\), the uncompressed
representation \(T_C\) strictly dominates every lossy alternative
\(\phi(T_C)\), \(\phi\in\mathcal{H}\), under the objective
\(L_\gamma\).
\end{proposition}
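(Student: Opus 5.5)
The plan is to mirror the proof of Proposition~\ref{prop:lossless}: first reduce the comparison for a single \(\phi\) to a scalar threshold, then make that threshold uniform over \(\mathcal{H}\) by using the supremum.

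\textbf{Step 1 (pointwise comparison).} Fix \(\phi\in\mathcal{H}\). Expanding \(L_\gamma\) from~\eqref{eq:Jprime} gives
\[
L_\gamma(\phi(T_C))-L_\gamma(T_C)=\gamma\,\Delta U(\phi)-(1-\gamma)\,\Delta P(\phi).
\]
So \(L_\gamma(T_C)<L_\gamma(\phi(T_C))\) holds exactly when
\[
\gamma\bigl(\Delta U(\phi)+\Delta P(\phi)\bigr)>\Delta P(\phi).
\]
By the no-free-lunch condition, \(\Delta U(\phi)+\Delta P(\phi)>0\), so we may divide without flipping the inequality. This yields \(\gamma>\Gamma(\phi)\), as already derived in the paragraph preceding the statement. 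Note also that \(\Gamma(\phi)\in(0,1)\), so \(\gamma^\star\in(0,1]\) is well defined.

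\textbf{Step 2 (uniformity).} Take \(\gamma\in(\gamma^\star,1)\). For every \(\phi\in\mathcal{H}\) we have \(\Gamma(\phi)\le\sup_{\phi'}\Gamma(\phi')=\gamma^\star<\gamma\), so the strict condition from Step 1 holds for each \(\phi\). Hence \(L_\gamma(T_C)<L_\gamma(\phi(T_C))\) for all \(\phi\in\mathcal{H}\). The strictness survives because \(\gamma\) sits strictly above the supremum, not merely above each individual \(\Gamma(\phi)\). If \(\gamma^\star=1\), the interval is empty and the claim holds vacuously.

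\textbf{Step 3 (consistency check).} As a cross-check, I would note that \(\Gamma(\phi)=1/(1+\Lambda(\phi))\) and that \(x\mapsto 1/(1+x)\) is strictly decreasing. It follows that \(\gamma^\star=1/(1+\lambda^\star)\), and that \(\gamma>\gamma^\star\) is equivalent to \(\lambda<\lambda^\star\) under \(\gamma=1/(1+\lambda)\). The result could therefore alternatively be deduced directly from Proposition~\ref{prop:lossless}, using the fact that \(L_\gamma=-J/(1+\lambda)\).

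The main obstacle is minor: the division step in Step 1 needs the positivity of the denominator, which is where the no-free-lunch assumption is used, together with bookkeeping of the supremum and infimum edge cases (for example \(\lambda^\star=0\) or \(\infty\)) when translating between the two parametrisations.
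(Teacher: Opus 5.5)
Your proposal is correct and follows essentially the same argument as the paper: reduce dominance for a fixed \(\phi\) to the threshold \(\gamma>\Gamma(\phi)\), then use \(\Gamma(\phi)\le\gamma^\star<\gamma\) to make the comparison uniform over \(\mathcal{H}\). The paper derives the threshold in the paragraph just before the statement, and you re-derive it while making explicit the positivity of \(\Delta U(\phi)+\Delta P(\phi)\) needed to divide. Your Step~3 cross-check via \(\Gamma(\phi)=1/(1+\Lambda(\phi))\) and \(L_\gamma=-J/(1+\lambda)\) is not part of the paper's proof; it corresponds to the paper's separate ``Correspondence'' paragraph.
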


\begin{proof}
Fix any \(\phi\in\mathcal{H}\). The condition
\(L_\gamma(T_C)<L_\gamma(\phi(T_C))\) holds if and only if
\(\gamma>\Gamma(\phi)\). By definition of the supremum,
\[
    \Gamma(\phi)\le\gamma^\star
    \qquad
    \forall\,\phi\in\mathcal{H}.
\]
Hence, for any \(\gamma>\gamma^\star\),
\[
    \gamma>\Gamma(\phi)
    \qquad
    \forall\,\phi\in\mathcal{H}.
\]
Therefore \(L_\gamma(T_C)<L_\gamma(\phi(T_C))\) uniformly over
\(\mathcal{H}\), so the uncompressed causal representation strictly
dominates all lossy alternatives.
\qed
\end{proof}

\paragraph{Correspondence with Proposition~\ref{prop:lossless}.}
The \(\lambda\)- and \(\gamma\)-parameterisations describe the same
trade-off. Since
\[
    \gamma=\frac{1}{1+\lambda},
\]
the condition
\[
    \lambda<\Lambda(\phi)
    =
    \frac{\Delta U(\phi)}{\Delta P(\phi)}
\]
is equivalent to
\[
    \gamma>
    \frac{\Delta P(\phi)}
         {\Delta U(\phi)+\Delta P(\phi)}
    =
    \Gamma(\phi).
\]
Thus the low-\(\lambda\) lossless regime in
Proposition~\ref{prop:lossless} is the same as the high-\(\gamma\)
regime in Proposition~\ref{prop:lossless-gamma}.

\section{Toy Example}
\label{appendix:sanity}

We give a closed-form sanity check for the Disentanglement Criterion: 
\[
    J(g)=I(g(T);Y\mid Z)-\lambda I(g(T);Z)
\]
in a linear additive Gaussian model. The treatment consists of causal and non-causal components, but only the causal ones affects the outcome. Consider the following SCM:
\begin{align}
    Z &= \epsilon_Z, \\
    T_C &= \alpha Z + \epsilon_C, \\
    T_{nC} &= \beta Z + \epsilon_N, \\
    Y &= \rho T_C + \delta Z + \epsilon_Y,
\end{align}
where all noise variables are mutually independent and Gaussian:
\[
    \epsilon_Z\sim\mathcal{N}(0,\sigma_Z^2),\quad
    \epsilon_C\sim\mathcal{N}(0,\sigma_C^2),\quad
    \epsilon_N\sim\mathcal{N}(0,\sigma_N^2),\quad
    \epsilon_Y\sim\mathcal{N}(0,\sigma_Y^2).
\]
The observed treatment is
\[
    T=(T_C,T_{nC}).
\]
By construction, the outcome depends on the treatment only through the
causal component:
\[
    Y\perp T_{nC}\mid (T_C,Z).
\]
Moreover, under non-degenerate parameters, e.g. $\beta\neq0$,
$\sigma_Z^2>0$, $\sigma_C^2>0$, and $\sigma_N^2>0$, the non-causal
component contains residual information about the confounder beyond
the causal component:
\[
    I(T_{nC};Z\mid T_C)>0.
\]
Indeed, since the variables are jointly Gaussian,
\[
    I(T_{nC};Z\mid T_C)
    =
    \frac12
    \log
    \frac{\mathrm{Var}(T_{nC}\mid T_C)}
         {\mathrm{Var}(T_{nC}\mid Z,T_C)}.
\]
Now
\[
    \mathrm{Var}(T_{nC}\mid Z,T_C)=\sigma_N^2,
\]
whereas
\[
    \mathrm{Var}(T_{nC}\mid T_C)
    =
    \beta^2\mathrm{Var}(Z\mid T_C)+\sigma_N^2,
\]
with
\[
    \mathrm{Var}(Z\mid T_C)
    =
    \frac{\sigma_Z^2\sigma_C^2}
         {\alpha^2\sigma_Z^2+\sigma_C^2}.
\]
Therefore,
\[
    I(T_{nC};Z\mid T_C)
    =
    \frac12
    \log\left(
        1+
        \frac{\beta^2\sigma_Z^2\sigma_C^2}
             {\sigma_N^2(\alpha^2\sigma_Z^2+\sigma_C^2)}
    \right)>0.
\]
Thus the Gaussian SCM satisfies assumptions \textbf{A1} and
\textbf{A2} from the main text. Consequently, we now compare the ideal causal representation with the naive entangled representation:
\[
    g_1(T)=T_C,~~~g_2(T)=T=(T_C,T_{nC}).
\]
For scalar jointly Gaussian variables,
\[
    I(U;V\mid W)
    =
    \frac12\log
    \frac{\mathrm{Var}(V\mid W)}
         {\mathrm{Var}(V\mid U,W)}.
\]
Conditioned on $Z$,
\[
    Y=(\rho\alpha+\delta)Z+\rho\epsilon_C+\epsilon_Y,
\]
so
\[
    \mathrm{Var}(Y\mid Z)=\rho^2\sigma_C^2+\sigma_Y^2.
\]
Conditioned on $(T_C,Z)$,
\[
    Y=\rho T_C+\delta Z+\epsilon_Y,
\]
and therefore
\[
    \mathrm{Var}(Y\mid T_C,Z)=\sigma_Y^2.
\]
Hence
\begin{align}
    I(T_C;Y\mid Z)
    &=
    \frac12
    \log
    \left(
        \frac{\rho^2\sigma_C^2+\sigma_Y^2}{\sigma_Y^2}
    \right) \\
    &=
    \frac12
    \log
    \left(
        1+\frac{\rho^2\sigma_C^2}{\sigma_Y^2}
    \right).
\end{align}
Since \( Y\perp T_{nC}\mid(T_C,Z) \implies I(T_C,T_{nC};Y\mid Z)=I(T_C;Y\mid Z)\). So the ideal and naive representations have identical utility.

For the penalty term,
\[
    I(T_C;Z)
    =
    \frac12
    \log
    \left(
        \frac{\alpha^2\sigma_Z^2+\sigma_C^2}{\sigma_C^2}
    \right)
    =
    \frac12
    \log
    \left(
        1+\frac{\alpha^2\sigma_Z^2}{\sigma_C^2}
    \right).
\]
By the mutual-information chain rule,
\[
    I(T_C,T_{nC};Z)
    =
    I(T_C;Z)+I(T_{nC};Z\mid T_C).
\]
The second term is strictly positive by the calculation above, so
\[
    I(g_2(T);Z)>I(g_1(T);Z).
\]
Therefore, for any $\lambda>0$,
\begin{align}
    J(g_1)-J(g_2)
    &=
    \lambda
    \bigl[
        I(g_2(T);Z)-I(g_1(T);Z)
    \bigr] \\
    &=
    \lambda I(T_{nC};Z\mid T_C)>0.
\end{align}
Thus, in this Gaussian SCM, the criterion strictly prefers the causal
representation $T_C$ over the naive entangled representation
$(T_C,T_{nC})$. Figure~\ref{fig:jg_lambda_plot} shows the same effect
empirically in a multivariate simulation where the representation
progressively removes coordinates of $T_{nC}$. Retaining more
non-causal information leaves the utility unchanged but increases the
confounding penalty, so the corresponding objective values decrease
more rapidly as $\lambda$ grows.

\begin{figure}[htb]
    \centering
    \includegraphics[width=0.8\linewidth]{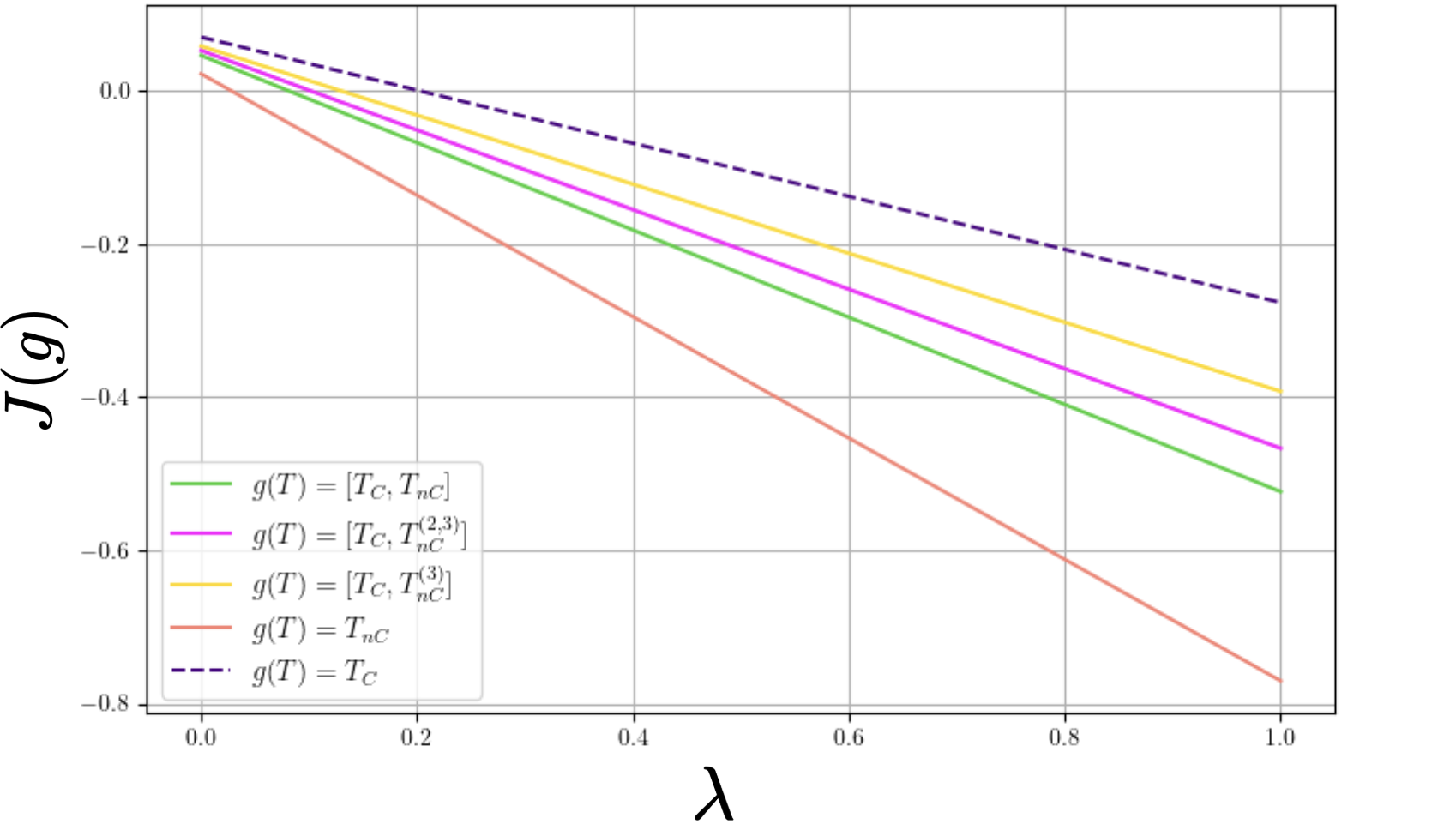}
    \caption{\textbf{Progressive removal of non-causal leakage.}
    The objective $J(g)$ is plotted as a function of the penalty
    weight $\lambda$ for representations retaining different amounts
    of the non-causal component $T_{nC}$. Since $T_{nC}$ contributes
    confounder information without adding outcome information beyond
    $T_C$, representations with more non-causal leakage incur a larger
    penalty. The purely causal representation $g(T)=T_C$ achieves the
    highest score among the tested representations.}
    \label{fig:jg_lambda_plot}
\end{figure}
\begin{figure}[h!]
    \centering
    \includegraphics[width=0.8\linewidth]{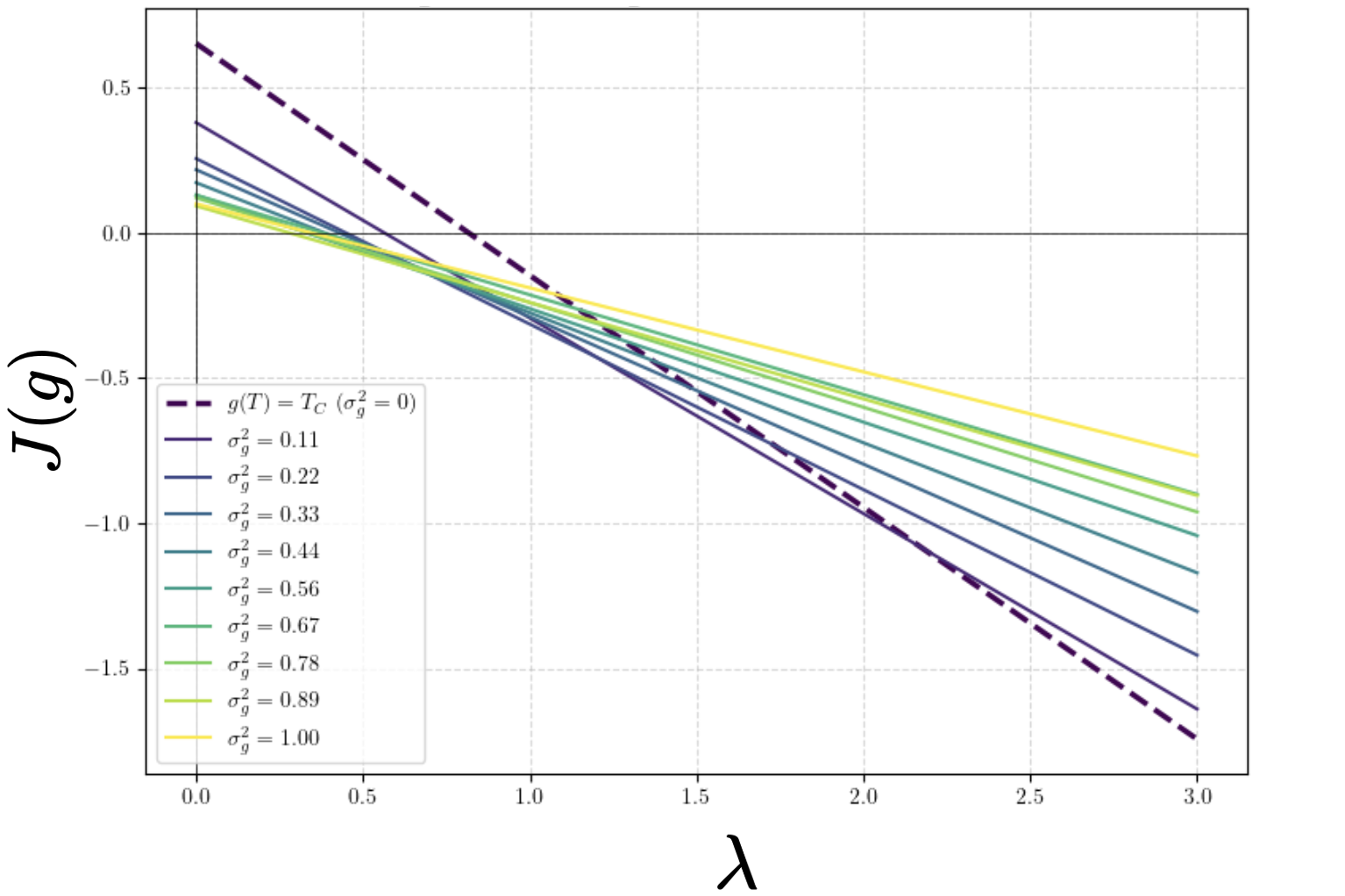}
    \caption{\textbf{Lossless regime and critical penalty weight.}
    The objective $J(g)$ is plotted as a function of $\lambda$ for
    stochastic compressed representations
    $G=T_C+\epsilon_G$, where
    $\epsilon_G\sim\mathcal{N}(0,\sigma_G^2)$. The dashed curve
    corresponds to the uncompressed causal representation
    $\sigma_G^2=0$. For small $\lambda$, preserving the full causal
    component is optimal; for larger $\lambda$, compression can become
    preferable because the reduction in confounder information
    outweighs the utility loss.}
    \label{fig:lambda_crit_demo}
\end{figure}
After the non-causal component has been removed, the remaining question
is whether the optimal representation should preserve all of $T_C$ or
compress it. To model lossy compression analytically, consider the
stochastic representation channel
\[
    G = T_C+\epsilon_G,
    \qquad
    \epsilon_G\sim\mathcal{N}(0,\sigma_G^2),
\]
with $\epsilon_G$ independent of all other variables. This can be viewed as a Gaussian Markov kernel from $T_C$ to $G$, where $\sigma_G^2$ controls the amount of compression. The penalty is
\[
    P(\sigma_G^2)
    :=
    I(G;Z)
    =
    \frac12
    \log\left(
        1+
        \frac{\alpha^2\sigma_Z^2}{\sigma_C^2+\sigma_G^2}
    \right),
\]
and the utility is
\[
    U(\sigma_G^2)
    :=
    I(G;Y\mid Z)
    =
    \frac12
    \log\left(
        \frac{\rho^2\sigma_C^2+\sigma_Y^2}
        {
        (\rho^2\sigma_C^2+\sigma_Y^2)
        -
        \frac{\rho^2\sigma_C^4}{\sigma_C^2+\sigma_G^2}
        }
    \right).
\]
Both terms decrease as $\sigma_G^2$ increases: compression removes
information about the confounder, but it also removes predictive
information about the outcome. The objective is
\[
    J(\sigma_G^2)
    =
    U(\sigma_G^2)-\lambda P(\sigma_G^2).
\]
At the uncompressed boundary $\sigma_G^2=0$,
\[
    \frac{dP}{d\sigma_G^2}\bigg|_{\sigma_G^2=0}
    =
    -\frac{\alpha^2\sigma_Z^2}
    {2\sigma_C^2(\sigma_C^2+\alpha^2\sigma_Z^2)},
\]
and
\[
    \frac{dU}{d\sigma_G^2}\bigg|_{\sigma_G^2=0}
    =
    -\frac{\rho^2}{2\sigma_Y^2}.
\]
Thus the uncompressed representation is locally optimal whenever
\[
    \frac{dJ}{d\sigma_G^2}\bigg|_{\sigma_G^2=0}
    =
    \frac{dU}{d\sigma_G^2}\bigg|_{\sigma_G^2=0}
    -
    \lambda
    \frac{dP}{d\sigma_G^2}\bigg|_{\sigma_G^2=0}
    \le 0.
\]
Equivalently,
\[
    \lambda
    \le
    \lambda_{\mathrm{crit}}
    :=
    \frac{
    \left|
    \frac{dU}{d\sigma_G^2}
    \right|_{\sigma_G^2=0}
    }{
    \left|
    \frac{dP}{d\sigma_G^2}
    \right|_{\sigma_G^2=0}
    }.
\]
This threshold is the ratio between the marginal utility loss and the
marginal penalty reduction at zero compression. Figure~\ref{fig:lambda_crit_demo}
visualises this trade-off: for small $\lambda$, preserving the full
causal component is optimal; for larger $\lambda$, the penalty becomes
sufficiently expensive that a noisy compressed representation can
achieve a higher objective value. This is the Gaussian analogue of the
lossless regime in Proposition~\ref{prop:lossless}.

\section{NCE-CLUB Upper Bound and Critic Interpretation}
\label{appendix:nceclub_app}

To construct a tractable surrogate for the disentanglement objective
\[
    J(g)=I(g(T);Y\mid Z)-\lambda I(g(T);Z),
\]
we require an upper bound on the penalty term \(I(g(T);Z)\).

\paragraph{General NCE-CLUB bound.}
We use the NCE-CLUB estimator~\cite{liang2023factorizedcontrastivelearninggoing}.
For a pair of random variables \((X_1,X_2)\), NCE-CLUB provides the
upper bound
\[
    I(X_1;X_2)
    \le
    I_{\mathrm{NCE\text{-}CLUB}}(X_1;X_2),
\]
where, for the optimal critic \(f^\star\),
\[
\begin{aligned}
I_{\mathrm{NCE\text{-}CLUB}}(X_1;X_2)
:=
\mathbb{E}_{(x_1,x_2^+)\sim p(x_1,x_2)}
    \bigl[f^\star(x_1,x_2^+)\bigr]  -
\mathbb{E}_{x_1\sim p(x_1),\,x_2^-\sim p(x_2)}
    \bigl[f^\star(x_1,x_2^-)\bigr].
\end{aligned}
\]
The first expectation is over joint, or positive, pairs; the second is
over independently sampled, or negative, pairs.

\paragraph{Application to the penalty term.}
Applying this construction to the penalty with
\[
    X_1=g(T),\qquad X_2=Z,
\]
gives
\[
    I(g(T);Z)
    \le
    I_{\mathrm{NCE\text{-}CLUB}}(g(T);Z).
\]
In practice, the optimal critic \(f^\star\) is unknown, so we use a
learned critic \(d_\phi\) and obtain the plug-in estimator
\[
\begin{aligned}
I_{\mathrm{NCE\text{-}CLUB}}(g(T);Z;d_\phi)
&:=
\mathbb{E}_{p(g(T),z)}
    \bigl[d_\phi(g(T),z)\bigr] \\
&\quad -
\mathbb{E}_{p(g(T))p(z)}
    \bigl[d_\phi(g(T),z)\bigr].
\end{aligned}
\]
This is the quantity used as the penalty estimator in the main text.

\paragraph{Interpretation of the critic \(d_\phi\).}
The optimal critic corresponds to the log-density ratio
\[
    d^\star(g,z)
    =
    \log\frac{p(z\mid g)}{p(z)}
    =
    \log p(z\mid g)-\log p(z),
\]
which is intractable because the true conditional \(p(z\mid g)\) is
unknown. We therefore approximate it with a learnable critic
\(d_\phi(g,z)\), which can be interpreted as a variational estimate
\[
    d_\phi(g_\theta(T),z)
    \approx
    \log q_\phi(z\mid g_\theta(T))-\log p(z),
\]
where \(q_\phi(z\mid g)\) is a learned conditional model.

The critic is trained to distinguish positive pairs
\[
    (g,z)\sim p(g,z)
\]
from negative pairs
\[
    (g,z)\sim p(g)p(z).
\]
This training aims to move \(d_\phi\) toward the optimal log-ratio and
thereby tighten the plug-in estimate.

\paragraph{Adversarial training dynamics.}
The resulting optimisation has a min--max form. The critic parameters
\(\phi\) are trained to maximise the NCE-CLUB objective, improving the
estimate of the treatment--confounder mutual information. The encoder
parameters \(\theta\), through \(g_\theta\), are trained to minimise the
same penalty term, thereby removing information about \(Z\) from the
learned representation. Combined with the utility loss, this encourages
representations that remain predictive of \(Y\) while suppressing
confounder-dependent non-causal information.

\section{Synthetic SCM: experimental details}
\label{appendix:synth}

We generate 1,000 training samples, and 1,000 validation samples from 
the following linear additive SCM with $d = 10$, $d_c = 5$:
\begin{align*}
  X &\sim \mathcal{N}(0, I_d), \\
  T_C &= X_{0:d_c} + \epsilon_{T_C},
        & \epsilon_{T_C} &\sim \mathcal{N}(0, I_{d_c}), \\
  T_{nC} &= X_{d_c:d} + \epsilon_{T_{nC}},
        & \epsilon_{T_{nC}} &\sim \mathcal{N}(0, I_{d-d_c}), \\
  T &= [T_C, T_{nC}], \\
  Y &= \mathbf{1}^\top T_C + \mathbf{1}^\top X + \epsilon_Y,
        & \epsilon_{Y,i} &\overset{\mathrm{iid}}{\sim}
                            \mathcal{N}(0, \sigma_Y^2).
\end{align*}

The predictor is a linear model
$\hat Y = w_X^\top X + w_T^\top T + b$.
We take the treatment representation to be the per-coordinate contribution of $T$ to
the prediction, $h_T = T \odot w_T \in \mathbb{R}^{10}$, where $w_T$
is the treatment slice of the predictor's own weight vector. The MI
critic therefore decorrelates $X$ from the components of $T$ that the
predictor actually uses. Under gradient reversal this pushes $w_T$
toward the causal coordinates of $T$. The critic itself is bilinear:
two bias-free linear projections $W_g, W_x \in \mathbb{R}^{10 \times 10}$
to a shared embedding space, with InfoNCE scores given by the full
$B \times B$ matrix $S = (W_g h_T)(W_x X)^\top$ over a minibatch
of size $B$.
We train with lr $3\times10^{-4}$, batch size $64$, $4{,}000$ epochs, and GRL schedule with 
$\lambda = 0$ for epochs 0--200, cubic ramp $0 \to \lambda_{\max} = 0.5$
over epochs 200--1000, held constant thereafter. 

\section{KuaiRand: experimental details}
\label{appendix:kuairand}

We use the KuaiRand-Pure~\cite{gao2022kuairand} (Zenodo \texttt{10439422}):
$27{,}285$ users, $7{,}583$ items, $\sim$1.4M algorithmic-exposure
rows and $\sim$1.2M random-exposure rows.
We train on $80\%$ of the algorithmic-exposure data, and use the remaining $20\%$ as our validation set for early stopping. 
We calculate the AUC on the held-out random-exposure data. 
Our model is a DeepFM~\cite{guo2017deepfm} backbone with $16$-dim
\texttt{user\_id}/\texttt{video\_id} embeddings. A treatment encoder
MLP $d_t \to 32 \to 16$ ingests $\sim$15 continuous video features
(\texttt{video\_duration}, one-hot \texttt{video\_type}, $11$
engagement-stat columns). Item representation $g$: encoded video
features ($16$) concatenated with \texttt{video\_id} embedding
($16$), $32$ dims total. User representation $u$: $18$ user-level
features concatenated with \texttt{user\_id} embedding ($16$), $34$
dims total. DeepFM tower with DNN hidden sizes $[32, 16]$, dropout
$0.2$, FM second-order embed dim $k = 8$. The CRL variant adds a
bilinear critic between $g$ and $u$ via gradient reversal. 
The architecture and capacity are identical between the
two configurations.
With train with Adam, lr $2 \times 10^{-3}$, weight decay $10^{-6}$, and batch size $4{,}096$. The GRL schedule is
step-based: $\lambda = 0$ for steps 0--400, linear ramp
$0 \to \lambda_{\max} = 0.3$ over steps 400--600, and held constant
thereafter.


\end{document}